\title{Self-Supervised GANs with Label Augmentation}
\author{
    Liang Hou\textsuperscript{\rm 1}\textsuperscript{,\rm 3}, Huawei Shen\textsuperscript{\rm 1}\textsuperscript{,\rm 3}, Qi Cao\textsuperscript{\rm 1}, Xueqi Cheng\textsuperscript{\rm 2}\textsuperscript{,\rm 3} \\
    \textsuperscript{\rm 1}Data Intelligence System Research Center, \\
    Institute of Computing Technology, Chinese Academy of Sciences \\
    \textsuperscript{\rm 2}CAS Key Laboratory of Network Data Science and Technology, \\
    Institute of Computing Technology, Chinese Academy of Sciences \\
    \textsuperscript{\rm 3}University of Chinese Academy of Sciences \\
    \texttt{\{houliang17z,shenhuawei,caoqi,cxq\}@ict.ac.cn} \\
}
\begin{document}

\maketitle

\begin{abstract}
  Recently, transformation-based self-supervised learning has been applied to generative adversarial networks (GANs) to mitigate catastrophic forgetting in the discriminator by introducing a stationary learning environment. However, the separate self-supervised tasks in existing self-supervised GANs cause a goal inconsistent with generative modeling due to the fact that their self-supervised classifiers are agnostic to the generator distribution. To address this problem, we propose a novel self-supervised GAN that unifies the GAN task with the self-supervised task by augmenting the GAN labels (real or fake) via self-supervision of data transformation. Specifically, the original discriminator and self-supervised classifier are unified into a label-augmented discriminator that predicts the augmented labels to be aware of both the generator distribution and the data distribution under every transformation, and then provide the discrepancy between them to optimize the generator. Theoretically, we prove that the optimal generator could converge to replicate the real data distribution. Empirically, we show that the proposed method significantly outperforms previous self-supervised and data augmentation GANs on both generative modeling and representation learning across benchmark datasets.
\end{abstract}

\section{Introduction}

Generative adversarial networks (GANs)~\cite{NIPS2014_5423} have developed rapidly in synthesizing realistic images in recent years~\cite{brock2018large,Karras_2019_CVPR,Karras_2020_CVPR}.
Conventional GANs consist of a generator and a discriminator, which are trained adversarially in a minimax game.
The discriminator attempts to distinguish the real data from the fake ones that are synthesized by the generator, while the generator aims to confuse the discriminator to reproduce the real data distribution.
However, the continuous evolution of the generator results in a non-stationary learning environment for the discriminator.
In such a non-stationary environment, the single discriminative signal (real or fake) provides unstable and limited information to the discriminator~\cite{metz2016unrolled,chen2016infogan,pmlr-v97-zhou19c,NEURIPS2019_d04cb95b}.
Therefore, the discriminator may encounter catastrophic forgetting~\cite{kirkpatrick2017overcoming,Chen_2019_CVPR,thanh2020catastrophic}, leading to the training instability of GANs and thereby mode collapse~\cite{NIPS2017_44a2e080,NEURIPS2018_8a3363ab,NEURIPS2018_288cc0ff,Mao_2019_CVPR} of the generator in learning high-dimensional, complex data distributions.

To alleviate the forgetting problem of the discriminator, transformation-based self-supervised tasks such as rotation recognition~\cite{gidaris2018unsupervised} have been applied to GANs~\cite{Chen_2019_CVPR}.
Self-supervised tasks are designed to predict the constructed pseudo-labels of data, establishing a stationary learning environment as the distribution of training data for this task does not change during training.
The discriminator enhanced by the stationary self-supervision is able to learn stable representations to resist the forgetting problem, and thus can provide continuously informative gradients to obtain a well-performed generator.
After obtaining empirical success, self-supervised GAN was theoretically analyzed and partially improved by a new self-supervised task~\cite{NEURIPS2019_d04cb95b}.
However, these self-supervised tasks cause a goal inconsistent with generative modeling that aims to learn the real data distribution.
In addition, the hyper-parameters that trade-off between the original GAN task and the additional self-supervised task bring extra workload for tuning on different datasets.
To sum up, we still lack a proper approach to overcome catastrophic forgetting and improve the training stability of GANs via self-supervised learning.

In this paper, we first point out that the reason for the presence of undesired goal in existing self-supervised GANs is that their self-supervised classifiers that train the generator are unfortunately agnostic to the generator distribution.
In other words, their classifiers cannot provide the discrepancy between the real and generated data distributions to teach the generator to approximate the real data distribution.
This issue originates from that these classifiers are trained to only recognize the pseudo-label of the transformed real data but not the transformed generated data.
Inspired by this observation, we realize that the self-supervised classifier should be trained to discriminatively recognize the corresponding pseudo-labels of the transformed generated data and the transformed real data.
In this case, the classifier includes the task of the original discriminator, thus the original discriminator could even be omitted.
Specifically, we augment the GAN labels (real or fake) via self-supervision of data transformation to construct augmented labels for a novel discriminator, which attempts to recognize the performed transformation of transformed real and fake data while simultaneously distinguishing between real and fake.
By construction, this formulates a single multi-class classification task for the label-augmented discriminator, which is different from a single binary classification task in standard GANs or two separate classification tasks in existing self-supervised GANs.
As a result, our method does not require any hyper-parameter to trade-off between the original GAN objective and the additional self-supervised objective, unlike existing self-supervised GANs.
Moreover, we prove that the optimal label-augmented discriminator is aware of both the generator distribution and the data distribution under different transformations, and thus can provide the discrepancy between them to optimize the generator.
Theoretically, the generator can replicate the real data distribution at the Nash equilibrium under mild assumptions that can be easily satisfied, successfully solving the convergence problem of previous self-supervised GANs.

The proposed method seamlessly combines two unsupervised learning paradigms, generative adversarial networks and self-supervised learning, to achieve stable generative modeling of the generator and superior representation learning of the discriminator by taking advantage of each other.
In the case of generative modeling, the self-supervised subtask resists the discriminator catastrophic forgetting, so that the discriminator can provide continuously informative feedback to the generator to learn the real data distribution.
In the case of representation learning, the augmented-label prediction task encourages the discriminator to simultaneously capture both semantic information brought by the real/fake-distinguished signals and the self-supervised signals to learn comprehensive data representations.
Experimental results on various datasets demonstrate the superiority of the proposed method compared to competitive methods on both generative modeling and representation learning.

\section{Preliminaries}

\subsection{Generative Adversarial Networks}

Learning the underlying data distribution $\mathcal{P}_d$, which is hard to specify but easy to sample from, lies at the heart of generative modeling.
In the branch of deep latent variable generative models, the generator $G:\mathcal{Z}\rightarrow \mathcal{X}$ maps a latent code $z\in\mathcal{Z}$ endowed with a tractable prior $\mathcal{P}_z$ (e.g., $\mathcal{N}(0,\mathrm{I})$) to a data point $x\in\mathcal{X}$, and thereby induces a generator distribution $\mathcal{P}_g=G\# \mathcal{P}_z$.
To learn the data distribution through the generator (i.e., $\mathcal{P}_g=\mathcal{P}_d$), one can minimize the Jensen–Shannon (JS) divergence between the data distribution and the generator distribution (i.e., $\min_G \mathbb{D}_\mathrm{JS}(\mathcal{P}_d\|\mathcal{P}_g)$).
To estimate this divergence, generative adversarial networks (GANs)~\cite{NIPS2014_5423} leverage a discriminator $D:\mathcal{X}\rightarrow\{0,1\}$ that distinguishes the real data from the generated ones.
Formally, the objective function for the discriminator and generator of the standard minimax GAN~\cite{NIPS2014_5423} is defined as follows:
\begin{align}
  \min_{G}\max_{D} V(G,D) = \mathbb{E}_{x\sim\mathcal{P}_d}[\log D(1|x)] + \mathbb{E}_{x\sim\mathcal{P}_g}[\log D(0|x)],
\end{align}
where $D(1|x)=1-D(0|x)\in[0,1]$ indicates the probability that a data $x$ is distinguished as real by the discriminator.
While theoretical guarantees hold in standard GANs, the non-stationary learning environment caused by the evolution of the generator contributes to catastrophic forgetting of the discriminator~\cite{Chen_2019_CVPR}, which means that the discriminator forgets the previously learned knowledge by over-fitting the current data and may therefore hurt the generation performance of the generator.

\subsection{Self-Supervised GAN}

To mitigate catastrophic forgetting of the discriminator, self-supervised GAN (SSGAN)~\cite{Chen_2019_CVPR} introduced an auxiliary self-supervised rotation recognition task to a classifier $C:\tilde{\mathcal{X}}=\mathcal{T}(\mathcal{X})\rightarrow \{1,2,\cdots,K\}$ that shares parameters with the discriminator, and forced the generator to collaborate with the classifier on the rotation recognition task.
Formally, the objective functions of SSGAN are given by:
\begin{align}
    \max_{D,C} V(G,D) &+ \lambda_d\cdot\mathbb{E}_{x\sim \mathcal{P}_d,T_k\sim\mathcal{T}}[\log C(k|T_k(x))], \\
    \min_{G} V(G,D) &- \lambda_g\cdot\mathbb{E}_{x\sim \mathcal{P}_g,T_k\sim\mathcal{T}}[\log C(k|T_k(x))],
\end{align}
where $\lambda_d$ and $\lambda_g$ are two hyper-parameters that balance the original GAN task and the additional self-supervised task, $\mathcal{T}=\{T_k\}_{k=1}^K$ is the set of deterministic data transformations such as rotations (i.e., $\mathcal{T}=\{0^{\circ},90^{\circ},180^{\circ},270^{\circ}\}$).
Throughout this paper, data transformations are uniformly sampled by default (i.e., $p(T_k)=\frac{1}{K},\forall T_k\in\mathcal{T}$), unless otherwise specified.

\begin{restatable}[\cite{NEURIPS2019_d04cb95b}]{thm}{ssgan}\label{thm:ssgan}
Given the optimal classifier $C^*(k|\tilde{x})=\frac{p_d^{T_k}(\tilde{x})}{\sum_{k=1}^K p_d^{T_k}(\tilde{x})}$ of SSGAN, at the equilibrium point, maximizing the self-supervised task for the generator is equivalent to:
\begin{align}
    \max_{G}\frac{1}{K}\sum_{k=1}^K\left[\mathbb{E}_{\tilde{x}\sim \mathcal{P}_g^{T_k}}\log\left(\frac{p_d^{T_k}(\tilde{x})}{\sum_{k=1}^K p_d^{T_k}(\tilde{x})}\right)\right],
\end{align}
where $\mathcal{P}_g^{T_k}$, $\mathcal{P}_d^{T_k}$ indicate the distribution of transformed generated or real data $\tilde{x}\in\tilde{\mathcal{X}}$ under the transformation $T_k$ with density of $p_g^{T_k}(\tilde{x})=\int \delta(\tilde{x}-T_k(x)) p_g(x)dx$ or $p_d^{T_k}(\tilde{x})=\int \delta(\tilde{x}-T_k(x)) p_d(x)dx$.
\end{restatable}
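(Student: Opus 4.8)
The plan is to proceed in two stages: first derive the stated optimal classifier $C^*$ from the maximization objective of the discriminator/classifier, and then substitute $C^*$ back into the self-supervised term of the generator's objective and simplify via a change of variables to the transformed distributions. The key observation is that at the equilibrium point the classifier has already converged to $C^*$, so the generator's self-supervised reward is evaluated against this fixed optimal classifier.

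For the first stage, the classifier $C$ appears only in the term $\mathbb{E}_{x\sim\mathcal{P}_d, T_k\sim\mathcal{T}}[\log C(k|T_k(x))]$, so maximizing over $C$ reduces to maximizing this expectation. With the default uniform sampling $p(T_k)=1/K$, I would expand the expectation over transformations and rewrite each inner term $\mathbb{E}_{x\sim\mathcal{P}_d}[\log C(k|T_k(x))]$ as an integral against the pushforward density, using $p_d^{T_k}(\tilde{x})=\int\delta(\tilde{x}-T_k(x))p_d(x)dx$. This recasts the objective as $\frac{1}{K}\sum_{k=1}^K\int p_d^{T_k}(\tilde{x})\log C(k|\tilde{x})\,d\tilde{x}$, a functional that is separable pointwise in $\tilde{x}$ after swapping the finite sum with the integral. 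For each fixed $\tilde{x}$ I then maximize $\sum_{k=1}^K p_d^{T_k}(\tilde{x})\log C(k|\tilde{x})$ subject to the simplex constraint $\sum_{k=1}^K C(k|\tilde{x})=1$; this is the standard constrained cross-entropy maximization, and a Lagrange multiplier argument (equivalently Gibbs' inequality) yields $C^*(k|\tilde{x})=p_d^{T_k}(\tilde{x})/\sum_{j=1}^K p_d^{T_j}(\tilde{x})$, the $1/K$ factors cancelling.

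For the second stage, the generator maximizes the self-supervised term $\mathbb{E}_{x\sim\mathcal{P}_g, T_k\sim\mathcal{T}}[\log C(k|T_k(x))]$ (the negative sign in the generator objective converts the minimization into maximization of this term). Substituting the equilibrium classifier $C^*$ and applying the same change of variables, now with the generator density $p_g$ and its pushforward $\mathcal{P}_g^{T_k}$, gives $\frac{1}{K}\sum_{k=1}^K\mathbb{E}_{\tilde{x}\sim\mathcal{P}_g^{T_k}}[\log(p_d^{T_k}(\tilde{x})/\sum_{j=1}^K p_d^{T_j}(\tilde{x}))]$, which is exactly the claimed objective.

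The individual steps are routine, and I expect the step requiring the most care to be the pointwise optimization: I must justify that interchanging the finite sum with the integral is legitimate and that the constrained maximum over the probability simplex is attained at the normalized-density point, so that pointwise maximization indeed produces the global maximizer over the function-valued argument $C$. The Dirac-delta change-of-variables identity is the other place to be careful, though it is purely formal given the density definitions supplied in the statement.
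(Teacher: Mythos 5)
Your proposal is correct and follows essentially the same route as the paper: a Dirac-delta change of variables to the pushforward densities (the paper's Proposition~\ref{pro:base}), followed by substituting the optimal classifier into the generator's self-supervised term and reading off the $\frac{1}{K}\sum_k \mathbb{E}_{\tilde{x}\sim\mathcal{P}_g^{T_k}}$ form. The only difference is that you derive $C^*$ explicitly via the pointwise simplex-constrained maximization, whereas the paper simply cites this form from Proposition 1 of~\cite{NEURIPS2019_d04cb95b}; your derivation is the standard Lagrange/Gibbs argument and is consistent with what that reference proves.
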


All proofs of theorems/propositions including this one (for completeness) in this paper are referred to Appendix~\ref{sec:proofs}.
Theorem~\ref{thm:ssgan} reveals that the self-supervised task of SSGAN enforces the generator to produce only rotation-detectable images other than the whole real images~\cite{NEURIPS2019_d04cb95b}, causing a goal inconsistent with generative modeling that faithfully learns the real data distribution.
Consequently, SSGAN still requires the original GAN task, which forms a multi-task learning framework that needs hyper-parameters to trade off the two separate tasks.

\subsection{SSGAN with Multi-Class Minimax Game}

To improve the self-supervised task in SSGAN, SSGAN-MS~\cite{NEURIPS2019_d04cb95b} proposed a multi-class minimax self-supervised task by adding the fake class $0$ to a label-extend classifier $C_+:\tilde{\mathcal{X}}\rightarrow \{0,1,2,\cdots,K\}$, where the generator and the classifier compete on the self-supervised task.
Formally, the objective functions of SSGAN-MS are defined as follows:
\begin{align}
    \max_{D,C_+} V(G,D) &+ \lambda_d\cdot(\mathbb{E}_{x\sim \mathcal{P}_d,T_k\sim\mathcal{T}}[\log C_+(k|T_k(x))] + \mathbb{E}_{x\sim \mathcal{P}_g,T_k\sim\mathcal{T}}[\log C_+(0|T_k(x))]), \\
    \min_{G} V(G,D) &- \lambda_g\cdot(\mathbb{E}_{x\sim \mathcal{P}_g,T_k\sim\mathcal{T}}[\log C_+(k|T_k(x))] - \mathbb{E}_{x\sim \mathcal{P}_g,T_k\sim\mathcal{T}}[\log C_+(0|T_k(x))]).
\end{align}

\begin{restatable}{thm}{ssganms}\label{thm:ssgan-ms}
Given the optimal classifier $C_+^*(k|\tilde{x})=\frac{p_d^T(\tilde{x})}{p_g^T(\tilde{x})}\frac{p_d^{T_k}(\tilde{x})}{\sum_{k=1}^K p_d^{T_k}(\tilde{x})}C_+^*(0|\tilde{x})$ of SSGAN-MS, at the equilibrium point, maximizing the self-supervised task for the generator is equivalent to\footnote{Note that our Theorem~\ref{thm:ssgan-ms} corrects the wrong version in the SSGAN-MS paper~\cite{NEURIPS2019_d04cb95b}, where the authors mistakenly regard $\frac{p_d^T(\tilde{x})}{p_g^T(\tilde{x})}=\frac{\sum_{k=1}^K p(T_k)p_d^{T_k}(\tilde{x})}{\sum_{k=1}^K p(T_k)p_g^{T_k}(\tilde{x})}$ as $\frac{p_d^{T_k}(\tilde{x})}{p_g^{T_k}(\tilde{x})}$ in their proof. Please see Appendix~\ref{sec:proof-ssgan-ms} for details.}:
\begin{align}
    \min_{G}\mathbb{D}_\mathrm{KL}(\mathcal{P}_g^T\|\mathcal{P}_d^T) - \frac{1}{K}\sum_{k=1}^K\left[\mathbb{E}_{\tilde{x}\sim \mathcal{P}_g^{T_k}}\log\left(\frac{p_d^{T_k}(\tilde{x})}{\sum_{k=1}^K p_d^{T_k}(\tilde{x})}\right)\right],
\end{align}
where $\mathcal{P}_g^T$, $\mathcal{P}_d^T$ represent the mixture distribution of transformed generated or real data $\tilde{x}\in\tilde{\mathcal{X}}$ with density of $p_g^T(\tilde{x})=\sum_{k=1}^K p(T_k)p_g^{T_k}(\tilde{x})$ or $p_d^T(\tilde{x})=\sum_{k=1}^K p(T_k)p_d^{T_k}(\tilde{x})$.
\end{restatable}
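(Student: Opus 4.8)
The plan is to follow the standard two-stage argument for GAN-type equilibria: first solve the inner maximization over the classifier $C_+$, then substitute the optimum into the generator's self-supervised objective and reduce it algebraically to the stated form. First I would write the classifier's $\lambda_d$-objective in integral form; with uniform transformations the real branch contributes $\frac1K\sum_k\int p_d^{T_k}(\tilde x)\log C_+(k|\tilde x)\,d\tilde x$ and the fake branch contributes $\int p_g^T(\tilde x)\log C_+(0|\tilde x)\,d\tilde x$, using $p_g^T=\frac1K\sum_k p_g^{T_k}$. For each fixed $\tilde x$ this is $\sum_{j=0}^K a_j\log C_+(j|\tilde x)$ maximized over the probability simplex, with $a_k=\frac1K p_d^{T_k}(\tilde x)$ for $k\ge1$ and $a_0=p_g^T(\tilde x)$; the Gibbs/Lagrange argument gives $C_+^*(j|\tilde x)=a_j/\sum_{j'}a_{j'}$, and since $\sum_{k\ge1}\frac1K p_d^{T_k}=p_d^T$ the normalizer is $p_d^T(\tilde x)+p_g^T(\tilde x)$. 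A short calculation with $\sum_k p_d^{T_k}=Kp_d^T$ then confirms these satisfy the factored identity for $C_+^*$ stated in the theorem.

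Next I would substitute into the generator's self-supervised term $\mathbb{E}_{x\sim\mathcal{P}_g,T_k\sim\mathcal{T}}[\log C_+^*(k|T_k(x))-\log C_+^*(0|T_k(x))]$. The common normalizer cancels in the difference, leaving the integrand $\log\frac{\frac1K p_d^{T_k}(\tilde x)}{p_g^T(\tilde x)}$; rewriting as an expectation over $\mathcal{P}_g^{T_k}$ and splitting the logarithm expresses the generator's objective as $\frac1K\sum_k\mathbb{E}_{\tilde x\sim\mathcal{P}_g^{T_k}}\log\frac{p_g^T}{p_d^{T_k}}$ up to the additive constant $\log K$. To close the loop I would expand the claimed target: substituting $\sum_k p_d^{T_k}=Kp_d^T$ into the cross-entropy term and writing $\mathbb{D}_\mathrm{KL}(\mathcal{P}_g^T\|\mathcal{P}_d^T)=\frac1K\sum_k\mathbb{E}_{\tilde x\sim\mathcal{P}_g^{T_k}}\log\frac{p_g^T}{p_d^T}$, the $\log p_d^T$ contributions cancel and the two sums collapse to exactly $\frac1K\sum_k\mathbb{E}_{\tilde x\sim\mathcal{P}_g^{T_k}}\log\frac{p_g^T}{p_d^{T_k}}$, which is the negated generator objective, yielding the equivalence.

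The main obstacle---and the point the footnote flags---is getting the fake-class optimum right. Because every transformed generated sample is labeled $0$, the fake-class mass aggregates over all transformations, so $C_+^*(0|\tilde x)$ is governed by the mixture density $p_g^T$ and not by any single $p_g^{T_k}$. This asymmetry between the real classes (each tracking its own $p_d^{T_k}$) and the lone fake class (tracking the mixture $p_g^T$) is precisely what produces the mixture-level divergence $\mathbb{D}_\mathrm{KL}(\mathcal{P}_g^T\|\mathcal{P}_d^T)$ rather than a per-transformation divergence; conflating $p_g^T$ with $p_g^{T_k}$ at this step is the error being corrected, so I would keep the mixture and per-transformation densities strictly separate throughout the substitution.
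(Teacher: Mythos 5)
Your proposal is correct and follows essentially the same route as the paper's proof: solve the pointwise inner maximization for the optimal classifier, substitute into the generator's self-supervised objective, and rearrange the logarithm into the mixture-level KL term plus the per-transformation cross-entropy term, taking care to keep $p_g^T$ (the mixture) distinct from $p_g^{T_k}$. The only cosmetic differences are that you derive $C_+^*$ from a Lagrange/Gibbs argument where the paper cites Proposition~2 of~\cite{NEURIPS2019_d04cb95b} and then normalizes, and you verify the final identity by expanding both sides to a common expression (up to the additive constant $\log K$) rather than expanding forward; neither changes the substance.
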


Our Theorem~\ref{thm:ssgan-ms} states that the improved self-supervised task for the generator of SSGAN-MS, under the optimal classifier, retains the inconsistent goal in SSGAN that enforces the generator to produce typical rotation-detectable images rather than the entire real images.
In addition, minimizing $\mathbb{D}_\mathrm{KL}(\mathcal{P}_g^T\|\mathcal{P}_d^T)$ cannot guarantee the generator to recover the real data distribution under the used data transformation setting (see Theorem~\ref{thm:dagan}).
Due to the existence of these issues, SSGAN-MS also requires the original GAN task to approximate the real data distribution.
We regard the learning paradigm of SSGAN-MS and SSGAN as the multi-task learning framework (see Figure~\ref{fig:models}).

\begin{figure}[tbp]
    \centering
    \includegraphics[width=1.0\textwidth]{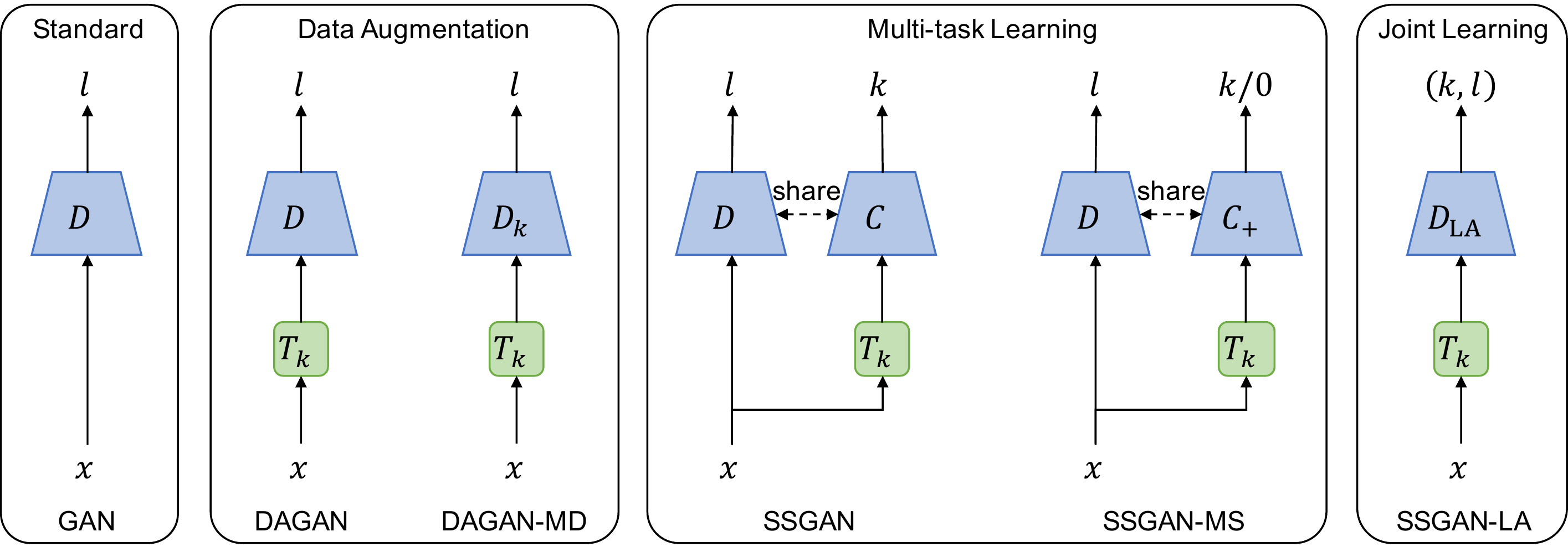}
    \caption{Schematics of the discriminators and classifiers of the competitive methods. We systematically divide these methods into four categories: Standard (GAN), Data Augmentation (DAGAN and DAGAN-MD), Multi-task Learning (SSGAN and SSGAN-MS), and Joint Learning (SSGAN-LA).}
    \label{fig:models}
\end{figure}

\section{Method}

We note that the self-supervised classifiers of existing self-supervised GANs are agnostic to the density of the generator distribution $p_g(x)$ (or $p_g^{T_1}(\tilde{x})$) (see $C^*(k|\tilde{x})$ in Theorem~\ref{thm:ssgan} and $C_+^*(k|\tilde{x})$ in Theorem \ref{thm:ssgan-ms}).
In other words, their classifiers do not know how far away the current generator distribution $\mathcal{P}_g$ is from the data distribution $\mathcal{P}_d$, making them unable to provide any informative guidance to match the generator distribution with the data distribution.
Consequently, existing self-supervised GANs suffer from the problem of involving an undesired goal for the generator when learning from the generator distribution-agnostic self-supervised classifiers.
One naive solution for this problem is simply discarding the self-supervised task for the generator.
However, the generator cannot take full advantage of the direct benefits of self-supervised learning in this case (see Appendix~\ref{sec:ssgan_ablation}).

In this work, our goal is to make full use of self-supervision to overcome catastrophic forgetting of the discriminator and enhance generation performance of the generator without introducing any undesired goal.
We argue that the generator-agnostic issue of previous self-supervised classifiers originates from that those classifiers are trained to recognize the corresponding angles for only the rotated real images but not the rotated fake images.
Motivated by the this understanding, we propose to allow the self-supervised classifier to discriminatively recognize the corresponding angles of the rotated real and fake images such that it can be aware of the generator distribution and the data distribution like the discriminator.
In particular, the new classifier includes the role of the original discriminator, therefore the discriminator could be omitted though the convergence speed might be slowed down (see Appendix~\ref{sec:original_discriminator} for performance with the original discriminator).
Specifically, we augment the original GAN labels (real or fake) via the self-supervision of data transformation to construct augmented labels and develop a novel label-augmented discriminator $D_\mathrm{LA}:\tilde{\mathcal{X}}\rightarrow\{1,2,\cdots,K\}\times\{0,1\}$ ($2K$ classes) to predict the augmented labels.
By construction, this formulation forms a single multi-class classification task for the label-augmented discriminator, which is different from a single binary classification in standard GANs and two separate classification tasks in existing self-supervised GANs (see Figure~\ref{fig:models}).
Formally, the objective function for the label-augmented discriminator of the proposed self-supervised GANs with label augmentation, dubbed SSGAN-LA, is defined as follows:
\begin{align}
    \max_{D_\mathrm{LA}} \mathbb{E}_{x\sim\mathcal{P}_d,T_k\sim\mathcal{T}}[\log D_\mathrm{LA}(k,1|T_k(x))] + \mathbb{E}_{x\sim\mathcal{P}_g,T_k\sim\mathcal{T}}[\log D_\mathrm{LA}(k,0|T_k(x))],
\end{align}
where $D_\mathrm{LA}(k,1|T_k(x))$ (resp. $D_\mathrm{LA}(k,0|T_k(x))$) outputs the probability that a transformed data $T_k(x)$ is jointly classified as the augmented label of real (resp. fake) and the $k$-th transformation.

\begin{restatable}{pro}{ssganlaD}\label{pro:ssgan-la}
For any fixed generator, given a data $\tilde{x}\in\tilde{\mathcal{X}}$ that drawn from mixture distribution of transformed data, the optimal label-augmented discriminator of SSGAN-LA has the form of:
\begin{align}
    D_\mathrm{LA}^*(k,1|\tilde{x})=\frac{p_d^{T_k}(\tilde{x})}{\sum_{k=1}^K (p_d^{T_k}(\tilde{x}) + p_g^{T_k}(\tilde{x}))},
    D_\mathrm{LA}^*(k,0|\tilde{x})=\frac{p_g^{T_k}(\tilde{x})}{\sum_{k=1}^K (p_d^{T_k}(\tilde{x}) + p_g^{T_k}(\tilde{x}))}.
\end{align}
\end{restatable}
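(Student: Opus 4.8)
The plan is to follow the classical recipe for deriving the optimal discriminator of a GAN, adapted to the multi-class augmented-label setting. First I would rewrite the discriminator objective as a single integral over the transformed data space $\tilde{\mathcal{X}}$. Using the uniform sampling $p(T_k)=\frac{1}{K}$ and the pushforward identity $\int p_d(x)\,f(T_k(x))\,dx=\int p_d^{T_k}(\tilde{x})\,f(\tilde{x})\,d\tilde{x}$, which follows directly from the Dirac-delta definitions of $p_d^{T_k}$ and $p_g^{T_k}$ recalled in Theorem~\ref{thm:ssgan}, the objective becomes
\begin{align}
\frac{1}{K}\int \sum_{k=1}^K\left[ p_d^{T_k}(\tilde{x})\log D_\mathrm{LA}(k,1|\tilde{x}) + p_g^{T_k}(\tilde{x})\log D_\mathrm{LA}(k,0|\tilde{x})\right]d\tilde{x}.
\end{align}

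Since the integrand at one point $\tilde{x}$ does not interact with the values of $D_\mathrm{LA}$ at other points, the maximization decouples, and it suffices to maximize the bracketed expression pointwise for each fixed $\tilde{x}$, subject to the single constraint that the $2K$ outputs form a probability vector, $\sum_{k=1}^K[D_\mathrm{LA}(k,1|\tilde{x})+D_\mathrm{LA}(k,0|\tilde{x})]=1$. This is the standard problem of maximizing $\sum_i a_i\log q_i$ over the probability simplex with nonnegative weights $a_i$. I would solve it either by a Lagrange multiplier on the normalization constraint or, more cleanly, by invoking Gibbs' inequality: the concave functional $\sum_i a_i\log q_i$ is uniquely maximized at $q_i=a_i/\sum_j a_j$. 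Identifying the weights as those attached to the $2K$ augmented labels, namely $p_d^{T_k}(\tilde{x})$ for the real classes and $p_g^{T_k}(\tilde{x})$ for the fake classes, whose total mass is $\sum_{k=1}^K(p_d^{T_k}(\tilde{x})+p_g^{T_k}(\tilde{x}))$, yields exactly the claimed closed form.

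The computation is routine once set up; the only points requiring care are (i) justifying the change of variables via the pushforward densities so that both terms are expressed against a common measure $d\tilde{x}$, and (ii) confirming that pointwise optimization over the simplex indeed gives the global maximizer of the integral, which is legitimate because the objective is a nonnegatively weighted integral of independent pointwise problems whose only coupling is among the $2K$ coordinates at a single $\tilde{x}$. I expect the main, albeit mild, obstacle to be bookkeeping: ensuring the normalization is taken over all $2K$ augmented labels \emph{jointly} rather than separately over the real and fake halves. It is precisely this joint normalization that makes the optimal discriminator sensitive to the relative magnitudes of $p_d^{T_k}$ and $p_g^{T_k}$, the very property the paper later exploits to drive $\mathcal{P}_g$ toward $\mathcal{P}_d$.
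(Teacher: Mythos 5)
Your proposal is correct and follows essentially the same route as the paper's proof: rewrite the objective as a single integral over $\tilde{\mathcal{X}}$, decouple it pointwise, and maximize $\sum_i a_i\log q_i$ over the $2K$-simplex (the paper via an explicit Lagrangian, you via Gibbs' inequality, which is the same computation). The only cosmetic difference is that you factor the joint density directly as $p(T_k)\,p^{T_k}(\tilde{x})=\frac{1}{K}p^{T_k}(\tilde{x})$, whereas the paper first writes it as $p^T(\tilde{x})\,p(T_k|\tilde{x})$ and converts back at the end.
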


Proposition~\ref{pro:ssgan-la} shows that the optimal label-augmented discriminator of SSGAN-LA is able to aware the densities of both real and generated data distributions under every transformation (i.e., $p_d^{T_k}(\tilde{x})$ and $p_g^{T_k}(\tilde{x})$), and thus can provide the discrepancy (i.e., $p_d^{T_k}(\tilde{x})/p_g^{T_k}(\tilde{x})=D^*(k,1|\tilde{x})/D^*(k,0|\tilde{x})$) between the data distribution and the generator distribution under every transformation to optimize the generator.
Consequently, we now propose to optimize the generator of SSGAN-LA by optimizing such discrepancy using the objective function formulated as follows:
\begin{align}
    \max_{G} \mathbb{E}_{x\sim\mathcal{P}_g,T_k\sim\mathcal{T}}[\log D_\mathrm{LA}(k,1|T_k(x))] - \mathbb{E}_{x\sim\mathcal{P}_g,T_k\sim\mathcal{T}}[\log D_\mathrm{LA}(k,0|T_k(x))].
\end{align}

\begin{restatable}{thm}{ssganlaG}\label{thm:ssgan-la}
The objective function for the generator of SSGAN-LA, given the optimal label-augmented discriminator, boils down to:
\begin{align}
    \min_G \frac{1}{K}\sum_{k=1}^K \mathbb{D}_\mathrm{KL}(\mathcal{P}_g^{T_k}\|\mathcal{P}_d^{T_k}).
\end{align}
The global minimum is achieved if and only if $\mathcal{P}_g=\mathcal{P}_d$ when $\exists T_k\in\mathcal{T}$ is an invertible transformation.
\end{restatable}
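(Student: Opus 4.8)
The plan is to substitute the optimal label-augmented discriminator from Proposition~\ref{pro:ssgan-la} into the generator objective and exploit a cancellation of denominators. First I would rewrite the joint expectation over $x\sim\mathcal{P}_g$ and $T_k\sim\mathcal{T}$: since the transformations are sampled uniformly and $T_k(x)$ has law $\mathcal{P}_g^{T_k}$ whenever $x\sim\mathcal{P}_g$, the generator objective becomes
\[
    \frac{1}{K}\sum_{k=1}^K \mathbb{E}_{\tilde{x}\sim\mathcal{P}_g^{T_k}}\left[\log D_\mathrm{LA}^*(k,1|\tilde{x}) - \log D_\mathrm{LA}^*(k,0|\tilde{x})\right].
\]
Plugging in the optimal forms of $D_\mathrm{LA}^*(k,1|\tilde{x})$ and $D_\mathrm{LA}^*(k,0|\tilde{x})$, the shared denominator $\sum_{k=1}^K (p_d^{T_k}(\tilde{x})+p_g^{T_k}(\tilde{x}))$ cancels inside the log-ratio, leaving $\log\big(p_d^{T_k}(\tilde{x})/p_g^{T_k}(\tilde{x})\big)$.

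Recognizing a negative KL divergence then completes the reduction. The $k$-th summand equals $-\mathbb{E}_{\tilde{x}\sim\mathcal{P}_g^{T_k}}[\log(p_g^{T_k}(\tilde{x})/p_d^{T_k}(\tilde{x}))] = -\mathbb{D}_\mathrm{KL}(\mathcal{P}_g^{T_k}\|\mathcal{P}_d^{T_k})$, so maximizing the generator objective is equivalent to $\min_G \frac{1}{K}\sum_{k=1}^K\mathbb{D}_\mathrm{KL}(\mathcal{P}_g^{T_k}\|\mathcal{P}_d^{T_k})$, which establishes the first claim.

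For the characterization of the global minimum I would invoke the non-negativity of the KL divergence: every term is nonnegative, so the sum vanishes precisely when $\mathcal{P}_g^{T_k}=\mathcal{P}_d^{T_k}$ for all $k$. The ``if'' direction is immediate, since $\mathcal{P}_g=\mathcal{P}_d$ forces $\mathcal{P}_g^{T_k}=T_k\#\mathcal{P}_g=T_k\#\mathcal{P}_d=\mathcal{P}_d^{T_k}$ for each $k$. For the ``only if'' direction the invertibility hypothesis is the crux: if some $T_k$ is invertible, then $T_k\#\mathcal{P}_g=T_k\#\mathcal{P}_d$ may be pulled back through $T_k^{-1}$ to conclude $\mathcal{P}_g=\mathcal{P}_d$.

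I expect the necessity direction to be the main obstacle. Without an invertible transformation in $\mathcal{T}$, matching all transformed marginals $\mathcal{P}_g^{T_k}$ to $\mathcal{P}_d^{T_k}$ need not recover $\mathcal{P}_g=\mathcal{P}_d$, because the collection of transformations could jointly discard distributional information --- exactly the deficiency flagged for distribution matching under lossy augmentations. The invertibility assumption, typically met by including the identity ($0^\circ$) transformation, sidesteps this by ensuring at least one transformation preserves all information about the underlying distribution.
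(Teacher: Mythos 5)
Your proposal is correct and follows essentially the same route as the paper: rewrite the expectation so that $T_k(x)$ has law $\mathcal{P}_g^{T_k}$, substitute the optimal discriminator so the shared denominator cancels in the log-ratio, and recognize the averaged reverse KL divergence. The only (immaterial) difference is in the endgame: the paper cites invariance of $f$-divergences under invertible transformations to identify $\mathbb{D}_\mathrm{KL}(\mathcal{P}_g^{T_k}\|\mathcal{P}_d^{T_k})$ with $\mathbb{D}_\mathrm{KL}(\mathcal{P}_g\|\mathcal{P}_d)$, whereas you use non-negativity of KL plus a pullback through $T_k^{-1}$, which is an equally valid (and arguably slightly cleaner) way to get the same characterization of the minimizer.
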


Theorem~\ref{thm:ssgan-la} shows that the generator of SSGAN-LA is encouraged to approximate the real data distribution under different transformations measured by the reverse KL divergence without any goal inconsistent with generative modeling, unlike previous self-supervised GANs.
In particular, the generator is accordingly guaranteed to replicate the real data distribution when an existing transformation is invertible such as the identity and 90-degree rotation transformations.
In summary, SSGAN-LA seamlessly incorporates self-supervised learning into the original GAN objective to improve the training stability on the premise of faithfully learning of the real data distribution.

Compared with standard GANs, the transformation prediction task, especially on the real samples, establishes stationary self-supervised learning environments in SSGAN-LA to prevent the discriminator from catastrophic forgetting.
In addition, the multiple discriminative signals under different transformations are capable of providing diverse feedback to obtain a well-performed generator.
Compared with previous self-supervised GANs, in addition to the unbiased learning objective for the generator, the discriminator of SSGAN-LA has the ability to learn better representations.
In particular, the discriminator utilizes not only the real data but also the generated data as augmented data to obtain robust transformation-recognition ability, and distinguishes between real data and generated data from multiple perspectives via different transformations to obtain enhanced discrimination ability.
These two abilities enable the discriminator to learn more meaningful representations than existing self-supervised GANs, which could in turn promote the generation performance of the generator.

\section{The Importance of Self-Supervision}

The proposed SSGAN-LA might benefit from two potential aspects, i.e., augmented data and the corresponding self-supervision.
To verify the importance of the latter, we introduce two ablation models that do not explicitly receive feedback from the self-supervision as competitors.

\subsection{Data Augmentation GAN}

We first consider a naive data augmentation GAN (DAGAN) that completely ignores self-supervision.
Formally, the objective function of DAGAN is formulated as follows:
\begin{align}
    \min_{G}\max_{D} \mathbb{E}_{x\sim\mathcal{P}_d,T_k\sim\mathcal{T}}[\log D(1|T_k(x))] + \mathbb{E}_{x\sim\mathcal{P}_g,T_k\sim\mathcal{T}}[\log D(0|T_k(x))].
\end{align}

This formulation is mathematically equivalent to DiffAugment~\cite{NEURIPS2020_55479c55} and IDA~\cite{tran2021data}.
We regard this approach as an ablation model of SSGAN-LA as it trains with only the GAN labels but ignores the self-supervised labels.
Next, we show the indispensability of self-supervision.

\begin{restatable}{thm}{daganT}\label{thm:dagan}
At the equilibrium point of DAGAN, the optimal generator implies $\mathcal{P}_{g}^T=\mathcal{P}_d^T$. However, if $(\mathcal{T},\circ)$ forms a group and $T_k\in\mathcal{T}$ is uniformly sampled, then the probability that the optimal generator replicates the real data distribution is $\mathbb{P}(\mathcal{P}_{g}=\mathcal{P}_d|\mathcal{P}_{g}^T=\mathcal{P}_d^T)=0$.
\end{restatable}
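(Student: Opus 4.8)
The plan is to split the statement into its two claims and treat them separately. For the first, I would observe that the DAGAN objective is nothing but the classical minimax GAN value played between the mixture distributions $\mathcal{P}_d^T$ and $\mathcal{P}_g^T$, since under uniform sampling $\mathbb{E}_{x\sim\mathcal{P}_d,T_k\sim\mathcal{T}}[\log D(1|T_k(x))]=\mathbb{E}_{\tilde{x}\sim\mathcal{P}_d^T}[\log D(1|\tilde{x})]$ and likewise the generated term collapses onto $\mathcal{P}_g^T$. The inner maximization then yields the usual optimal discriminator $D^*(1|\tilde{x})=p_d^T(\tilde{x})/(p_d^T(\tilde{x})+p_g^T(\tilde{x}))$, and back-substitution reduces the generator's problem to $\min_G \mathbb{D}_\mathrm{JS}(\mathcal{P}_d^T\|\mathcal{P}_g^T)$ up to the constant $-\log 4$. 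The Jensen--Shannon divergence vanishes exactly when $\mathcal{P}_g^T=\mathcal{P}_d^T$, which establishes the first assertion; this simply mirrors the classical GAN analysis recalled in the preliminaries, now applied to the transformed mixtures.

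For the second claim I would exploit the group hypothesis through a symmetrization operator $A$ defined by $(Ap)(\tilde{x})=\frac{1}{K}\sum_{k=1}^K p^{T_k}(\tilde{x})$, so that $\mathcal{P}_d^T=A\mathcal{P}_d$ and $\mathcal{P}_g^T=A\mathcal{P}_g$. The key structural fact is that when $(\mathcal{T},\circ)$ is a group, $A$ is an idempotent projection: applying any $T_j$ merely permutes $\{T_j\circ T_k\}_{k=1}^K=\mathcal{T}$, so $Ap$ is $\mathcal{T}$-invariant and $A^2=A$. Consequently the constraint $\mathcal{P}_g^T=\mathcal{P}_d^T$ is equivalent to $A(p_g-p_d)=0$, i.e. $p_g-p_d\in\ker A$, meaning that $A$ discards the entire non-invariant component of any distribution. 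The set of optimal generators is therefore the affine slice $\{p_d+h : h\in\ker A,\ p_d+h\ge 0,\ \int h=0\}$, in which $\mathcal{P}_d$ is a single distinguished point.

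I would then exhibit an explicit continuum inside this slice to force the conditional probability to zero. Choosing any $T_j\in\mathcal{T}$ under which $\mathcal{P}_d$ is not invariant (a generic condition, failing only when $\mathcal{P}_d$ is fully $\mathcal{T}$-symmetric), set $h=p_d^{T_j}-p_d$; group closure gives $Ah=Ap_d^{T_j}-Ap_d=0$ because $\{T_k\circ T_j\}_k=\mathcal{T}$, while $h\ne 0$. The convex combinations $p_\epsilon=(1-\epsilon)p_d+\epsilon\, p_d^{T_j}$, $\epsilon\in[0,1]$, are genuine densities lying in the fiber, pairwise distinct, and equal to $p_d$ only at $\epsilon=0$. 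Hence the set of optimal generators contains an uncountable curve on which $\mathcal{P}_g=\mathcal{P}_d$ holds at exactly one point, so under any non-atomic law on that set the conditional probability $\mathbb{P}(\mathcal{P}_g=\mathcal{P}_d\mid\mathcal{P}_g^T=\mathcal{P}_d^T)$ is zero.

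The main obstacle I anticipate is not the algebra but pinning down the measure-theoretic meaning of the probability statement: the conclusion is really a genericity/dimension claim, so I would need to make precise that a single point in the positive-dimensional (indeed infinite-dimensional) affine solution set is a null event, either by placing Lebesgue measure on a finite-dimensional parametrization of $\ker A$ or by arguing directly that the solution set is an uncountable convex body while $\{\mathcal{P}_g=\mathcal{P}_d\}$ is a singleton. The group hypothesis is precisely what guarantees $\ker A$ is nontrivial, so that the fiber is genuinely larger than a point; without it $A$ need not be a projection and the non-triviality of the solution set would require separate justification.
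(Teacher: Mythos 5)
Your proposal is correct and follows essentially the same route as the paper: the first claim is the standard Goodfellow reduction applied to the transformed mixtures $\mathcal{P}_d^T$ and $\mathcal{P}_g^T$, and the second rests on the same group fact $\{T_j\circ T_i : T_i\in\mathcal{T}\}=\mathcal{T}$ (closure plus cancellation), which the paper uses to show that every mixture $\sum_j \pi_j\, p_d^{T_j}$ over the simplex lies in the fiber $\{\mathcal{P}_g^T=\mathcal{P}_d^T\}$ --- your one-parameter curve $p_\epsilon=(1-\epsilon)p_d+\epsilon\, p_d^{T_j}$ is exactly an edge of that simplex, and your projection-operator framing is a repackaging of the paper's direct computation. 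Your explicit caveat that $\mathcal{P}_d$ must not already be $\mathcal{T}$-invariant, and your observation that the probability-zero conclusion is really a genericity statement requiring a non-atomic law on the solution set, are both points the paper leaves implicit (it simply places a measure on the weight simplex and notes that $\{\pi_1=1\}$ is null).
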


Theorem~\ref{thm:dagan} suggests that the formulation of DAGAN is insufficient to recover the real data distribution for the generator as $(\{0^{\circ}, 90^{\circ}, 180^{\circ}, 270^{\circ}\}, \circ)$ forms a cyclic group and the transformation is uniformly sampled by default.
Intuitively, if transformations are not identified by their corresponding self-supervision and can be represented by others, then the transformed data will be indistinguishable from the original data so that they may be leaked to the generator.
In more detail, the generator would converge to an arbitrary mixture distribution of transformed real data (see Appendix~\ref{sec:proof-dagan}).
Note that all theoretical results in this paper are not limited to rotation but many other types of data transformation, e.g., RBG permutation and patch shuffling.
Nonetheless, we follow the practice of SSGAN to use rotation as the data transformation for fairness.
In summary, data augmentation GANs may even suffer from the convergence problem without utilizing self-supervision.

\subsection{DAGAN with Multiple Discriminators}

To verify the importance of self-supervision while eliminating the convergence problem of DAGAN, we introduce another data augmentation GAN that has the same convergence point as SSGAN-LA.
DAG~\cite{tran2021data} proposed a data augmentation GAN with multiple discriminators $\{D_{k}\}_{k=1}^K$ (we refer it as DAGAN-MD throughout this paper), which are indexed by the self-supervision and share all layers but the head with each other, to solve the convergence issue of DAGAN.
Formally, the objective function of DAGAN-MD is formulated as:
\begin{align}
    \min_{G}\max_{\{D_k\}_{k=1}^K} \mathbb{E}_{x\sim\mathcal{P}_d,T_k\sim\mathcal{T}}[\log D_k(1|T_k(x))] + \mathbb{E}_{x\sim\mathcal{P}_g,T_k\sim\mathcal{T}}[\log D_k(0|T_k(x))].
\end{align}

We regard DAGAN-MD as an ablation model of SSGAN-LA since DAGAN-MD takes the self-supervised signal as input while SSGAN-LA views it as target (see Figure~\ref{fig:models}).
It is proved that the generator of DAGAN-MD could approximate the real data distribution at the optimum~\cite{tran2021data}.
However, the lack of leveraging self-supervised signals as supervision will disadvantage the performance of DAGAN-MD in two ways.
On one hand, the discriminators cannot learn stable representations to resist catastrophic forgetting without learning from self-supervised tasks.
On the other hand, the discriminators cannot learn high-level representations contained in the specific semantics of different transformations.
Specifically, these discriminators presumably enforce invariance to the shared layers~\cite{pmlr-v119-lee20c}, which could hurt the performance of representation learning as transformations modify the semantics of data.
As we emphasized above, the more useful information learned by the discriminator, the better guidance provided to the generator, and the better optimization result reached by the generator.
Accordingly, the performance of generative modeling would be affected.

\section{Experiments}

Our code is available at \url{https://github.com/houliangict/ssgan-la}.

\subsection{SSGAN-LA Faithfully Learns the Real Data Distribution}

We experiment on a synthetic dataset to intuitively verify whether SSGAN, SSGAM-MS, and SSGAN-LA can accurately match the real data distribution.
The real data distribution is a one-dimensional normal distribution $\mathcal{P}_d=\mathcal{N}(0,1)$.
The set of transformations is $\mathcal{T}=\{T_k\}_{k=1}^{K=4}$ with the $k$-th transformation of $T_k(x):=x+2(k-1)$ ($T_1$ is the identity transformation).
We assume that the distributions of real data through different transformations have non-negligible overlaps.
The generator and discriminator networks are implemented by multi-layer perceptrons with hidden size of 10 and non-linearity of Tanh.
Figure~\ref{fig:synthetic} shows the density estimation of the target and generated data estimated by kernel density estimation~\cite{parzen1962estimation} through sampling 10k data.
SSGAN learns a biased distribution as its self-supervised task forces the generator to produce transformation-classifiable data.
SSGAN-MS slightly mitigates this issue, but still cannot accurately approximate the real data distribution.
The proposed SSGAN-LA successfully replicates the data distribution and achieves the best maximum mean discrepancy (MMD)~\cite{gretton2012kernel} results as reported in brackets.
In general, experimental results on synthetic data accord well with theoretical results of self-supervised methods.

\begin{figure}[tbp]
\centering
\subfigure[Real Data]{
\includegraphics[width=0.23\textwidth]{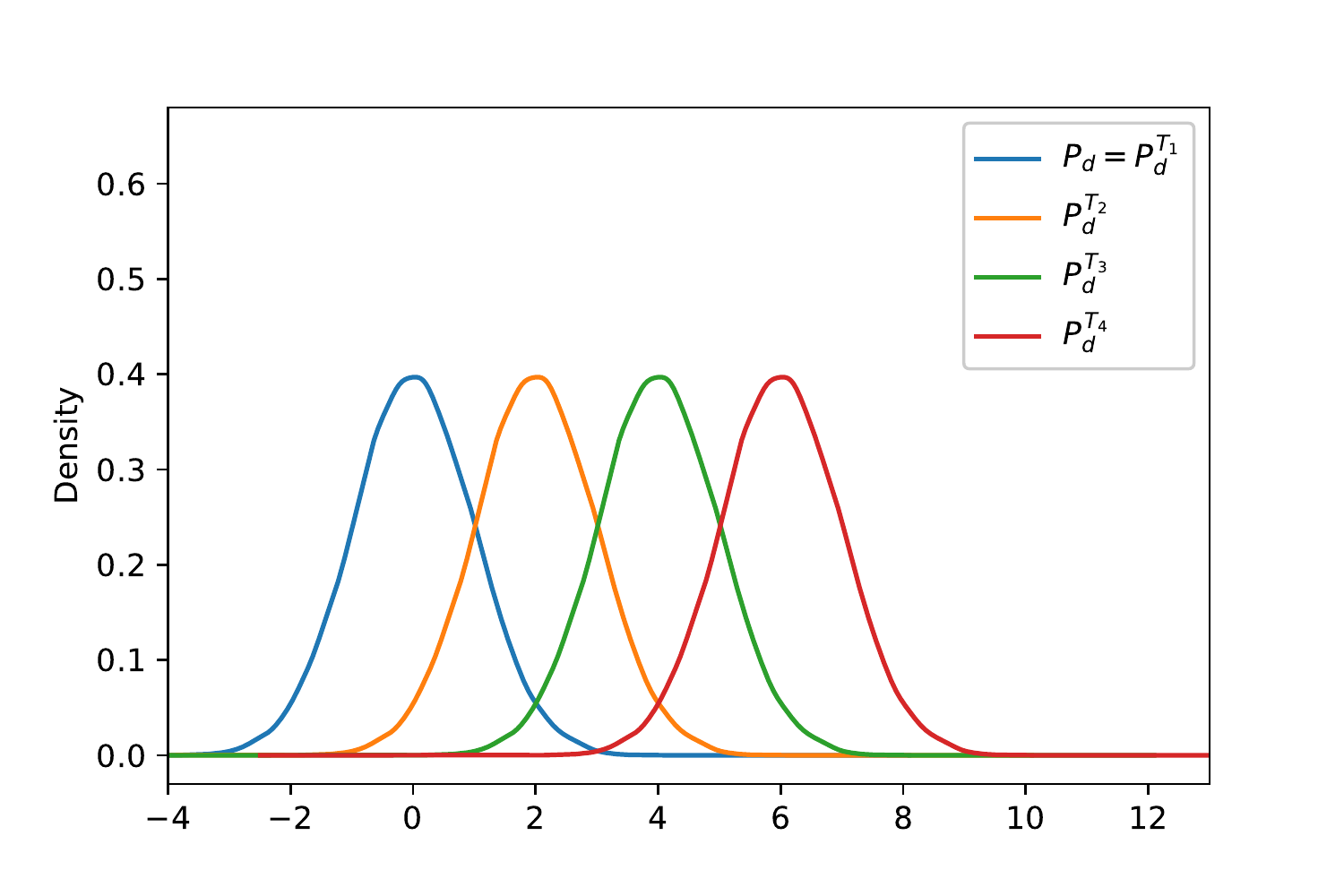}}
\subfigure[SSGAN ($0.4367$)]{
\includegraphics[width=0.23\textwidth]{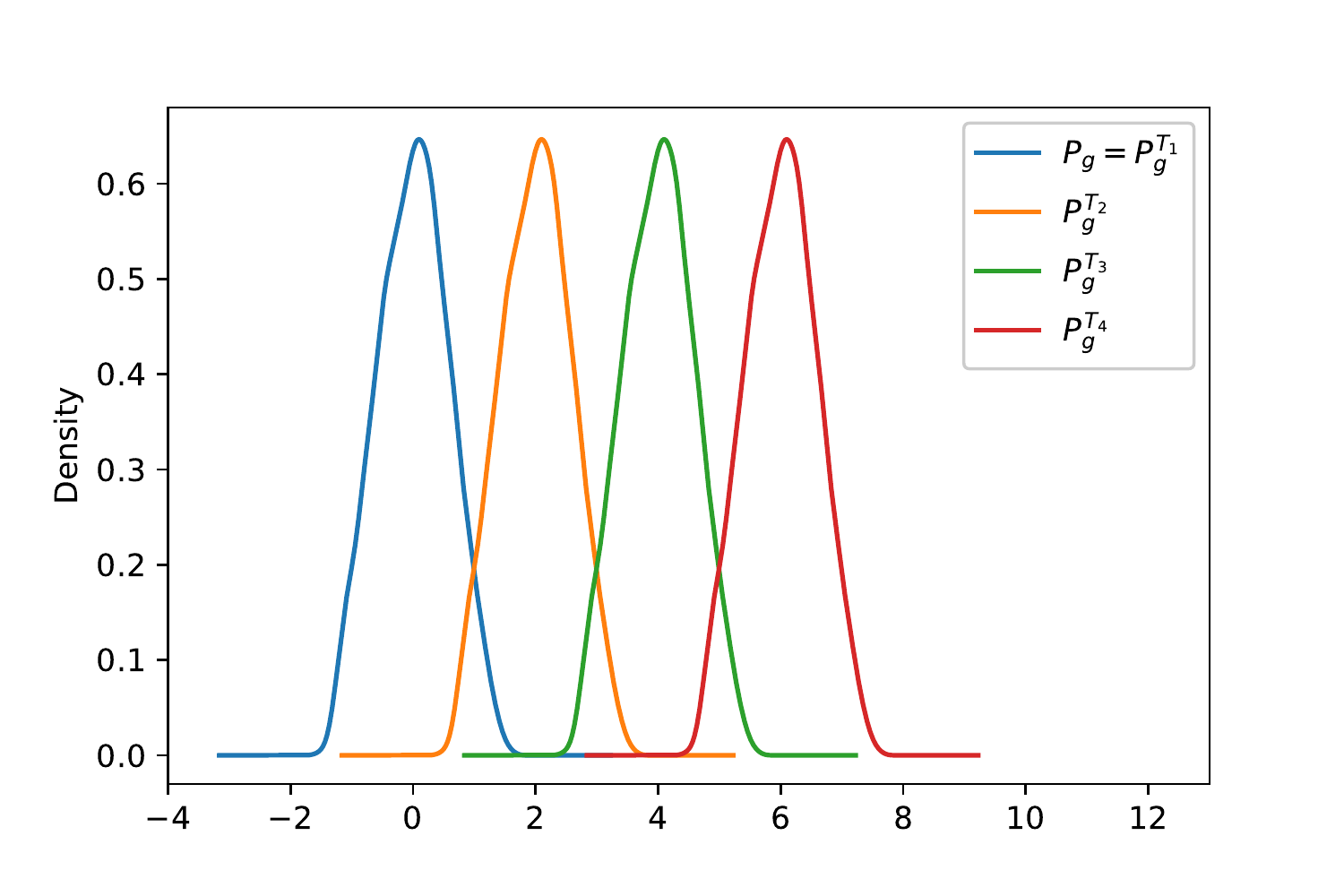}}
\subfigure[SSGAN-MS ($0.2551$)]{
\includegraphics[width=0.23\textwidth]{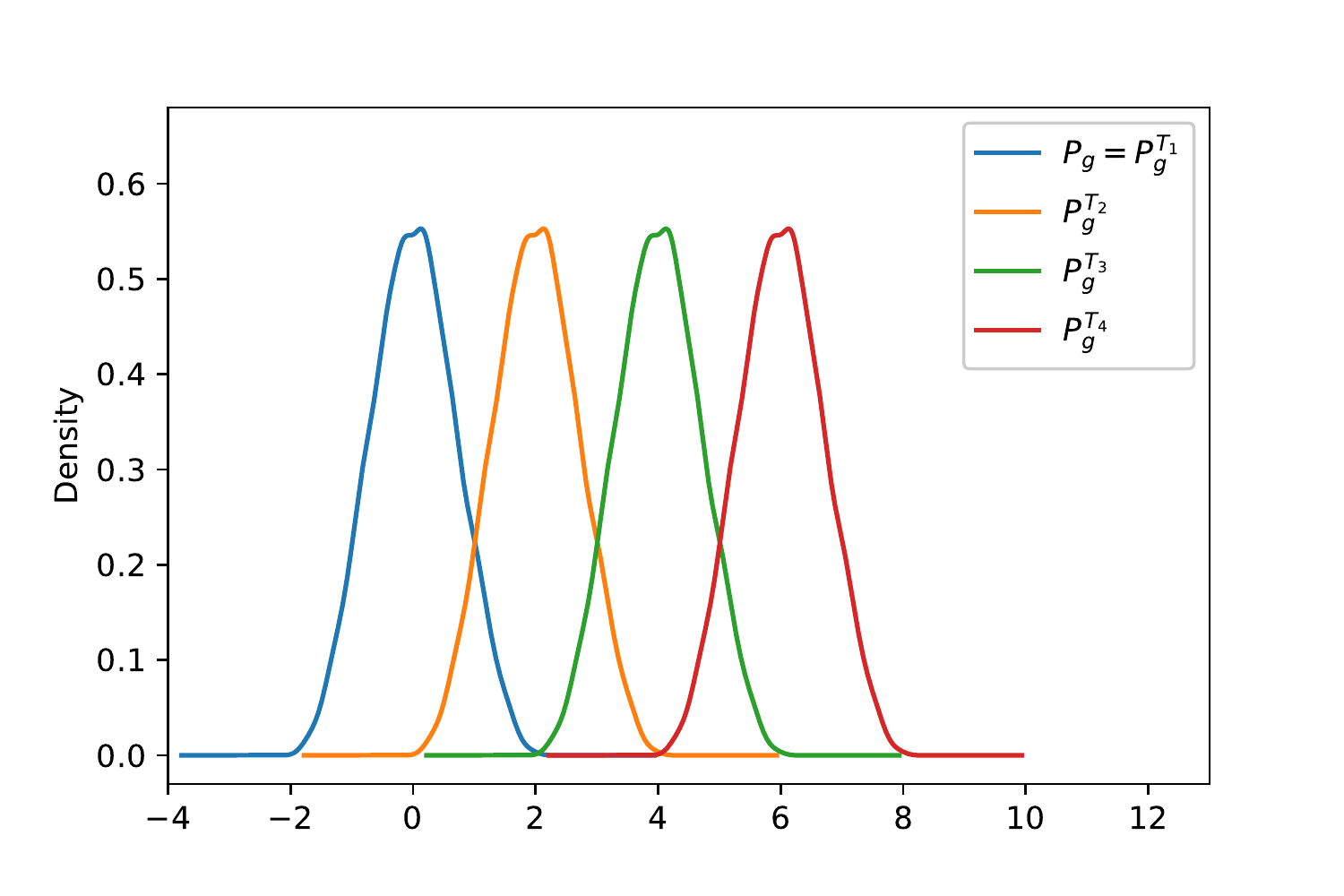}}
\subfigure[SSGAN-LA ($\mathbf{0.0035}$)]{
\includegraphics[width=0.23\textwidth]{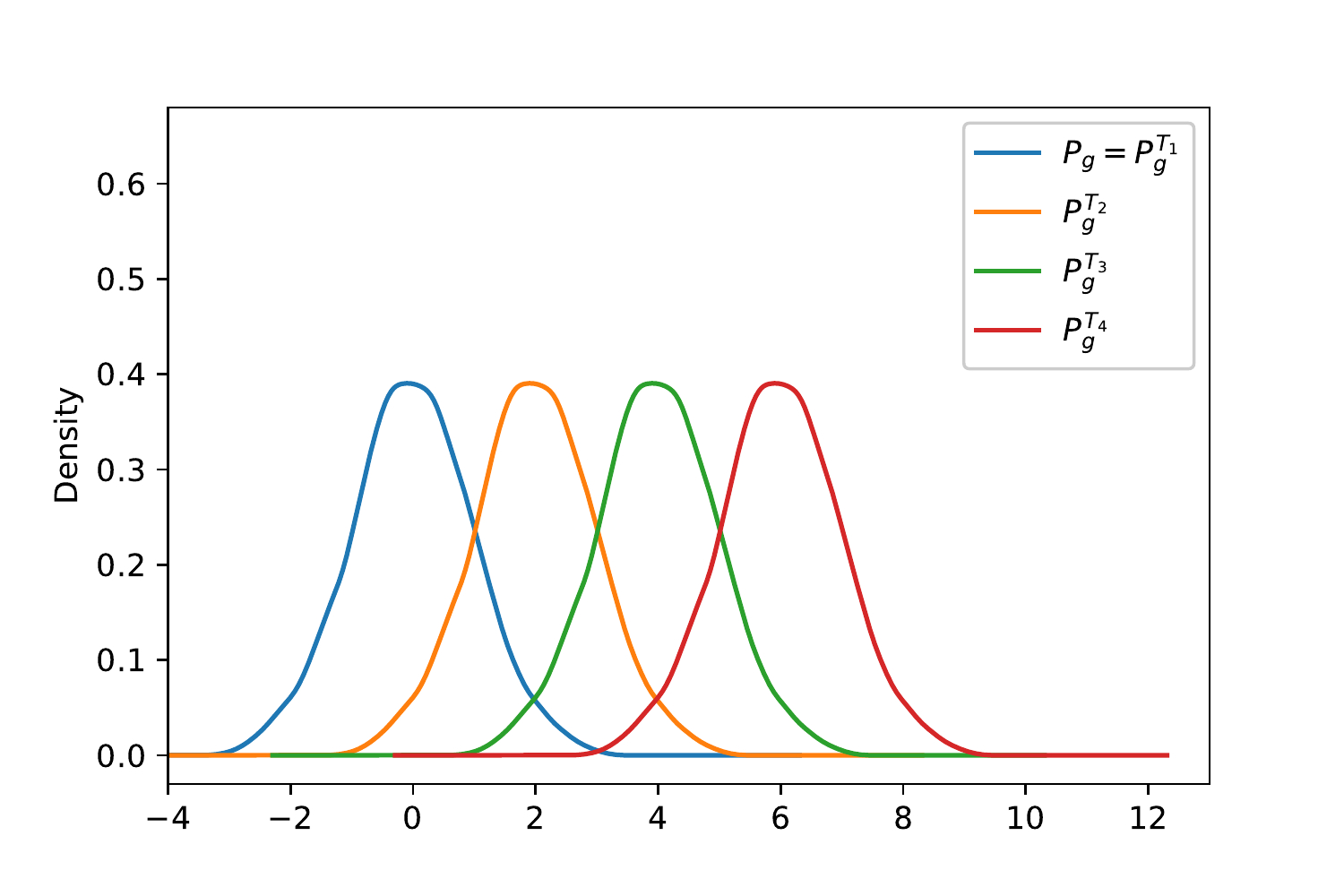}}
\caption{Learned distribution on one-dimensional synthetic data. The blue line at left-most is the untransformed real or generated data distribution. The numbers in brackets are the MMD~\cite{gretton2012kernel} scores.}
\label{fig:synthetic}
\end{figure}

\begin{figure}[tbp]
\centering
\subfigure[DAGAN]{
\includegraphics[width=0.48\textwidth]{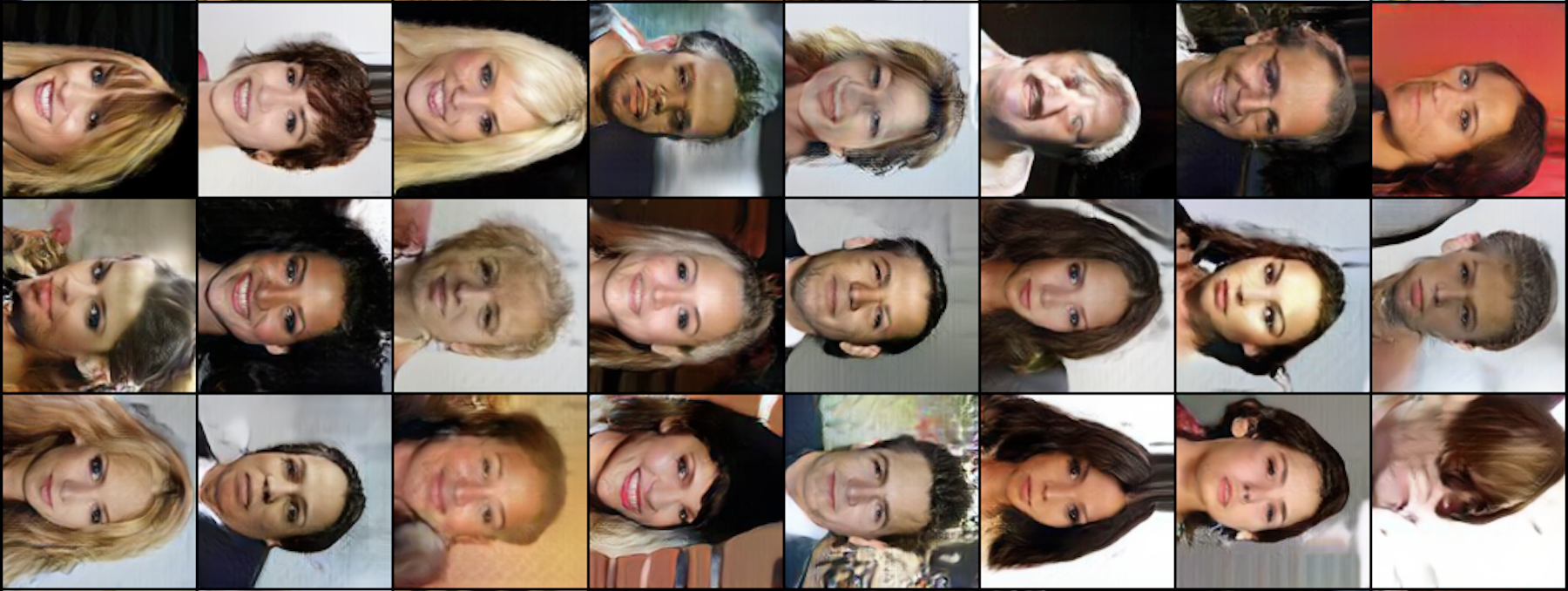}}
\subfigure[SSGAN-LA]{
\includegraphics[width=0.48\textwidth]{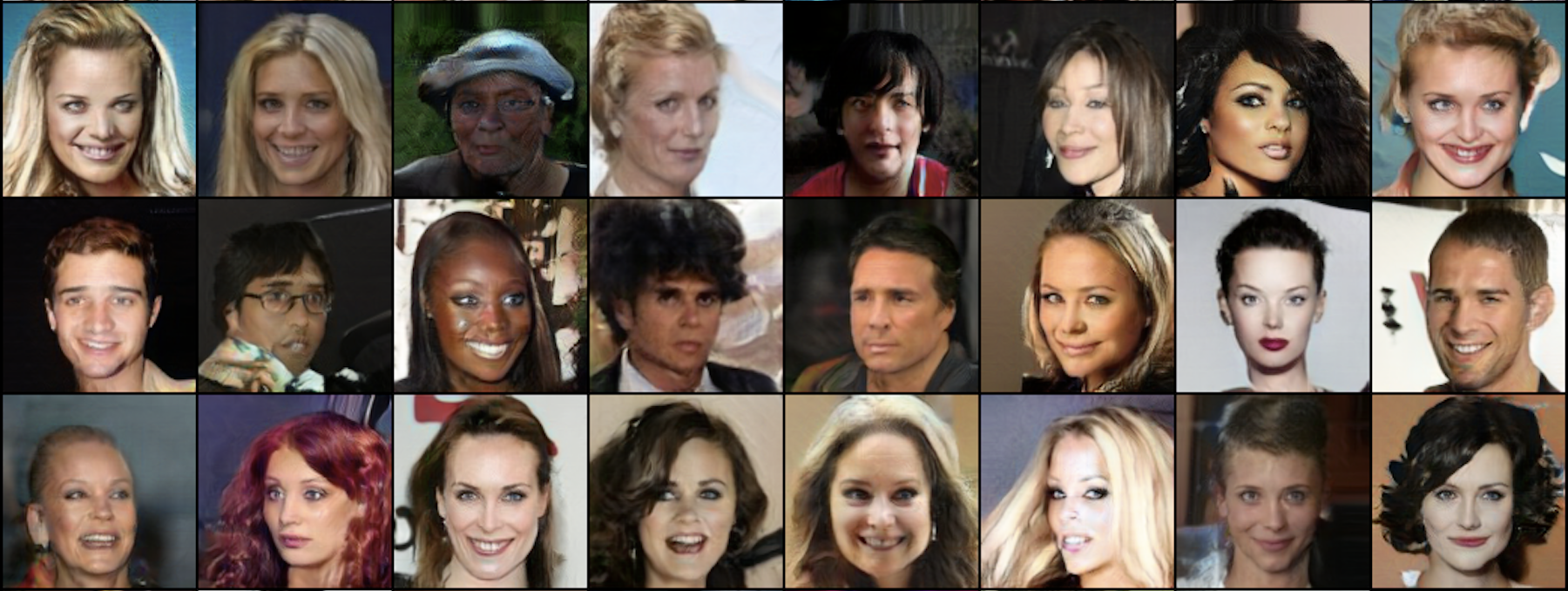}}
\caption{Generated samples on CelebA. DAGAN suffers from the augmentation-leaking issue.}
\label{fig:celeba}
\end{figure}

\begin{table}[tbp]
\caption{FID ($\downarrow$) and IS ($\uparrow$) comparison of methods on CIFAR-10, STL-10, and Tiny-ImageNet.
}
\scalebox{0.9}{
\centering
\begin{tabular}{cccccccc}
\toprule
Dataset & Metric & GAN & SSGAN & SSGAN-MS & DAGAN$^+$ & DAGAN-MD & SSGAN-LA \\
\midrule
\multirow{2}*{CIFAR-10} & FID ($\downarrow$) & $10.83$ & $7.52$ & $7.08$ & $10.01$ & $7.80$ & $\mathbf{5.87}$ \\
& IS ($\uparrow$) & $8.42$ & $8.29$ & $8.45$ & $8.10$ & $8.31$ & $\mathbf{8.57}$ \\
\midrule 
\multirow{2}*{STL-10} & FID ($\downarrow$) & $20.15$ & $16.84$ & $16.46$ & $16.42$ & $16.16$ & $\mathbf{14.58}$ \\
& IS ($\uparrow$) & $10.25$ & $10.36$ & $10.40$ & $10.27$ & $10.31$ & $\mathbf{10.61}$ \\
\midrule
\multirow{2}*{Tiny-ImageNet} & FID ($\downarrow$) & $31.01$ & $30.09$ & $25.76$ & $52.54$ & $39.57$ & $\mathbf{23.69}$ \\
& IS ($\uparrow$) & $9.80$ & $10.21$ & $10.57$ & $7.55$ & $8.44$ & $\mathbf{11.14}$ \\
\bottomrule
\end{tabular}}
\label{tab:sample_quality}
\end{table}

\subsection{SSGAN-LA Resolves the Augmentation-Leaking Issue}

Figure~\ref{fig:celeba} shows images randomly generated by DAGAN and SSGAN-LA on the CelebA dataset~\cite{liu2015faceattributes}.
Apparently, the generator of DAGAN converges to generate the rotated images, validating Theorem~\ref{thm:dagan} that DAGAN cannot guarantee to recover the original data distribution under the used rotation-based data augmentation setting.
The proposed SSGAN-LA resolves this augmentation-leaking problem attributed to explicitly identifying different rotated images by predicting the self-supervised pseudo signals (i.e., rotation angles).

\subsection{Comparison of Sample Quality}
\label{sec:sample_quality}

We conduct experiments on three real-world datasets: CIFAR-10~\cite{krizhevsky2009learning}, STL-10~\cite{coates2011analysis}, and Tiny-ImageNet~\cite{le2015tiny}.
We implement all methods based on unconditional BigGAN~\cite{brock2018large} without accessing human-annotated labels.
We defer the detailed experimental settings into Appendix~\ref{sec:settings} due to the limited space.
To evaluate the performance of methods on sample quality, we adopt two widely-used evaluation metrics: Fr\'echet Inception Distance (FID)~\cite{NIPS2017_7240} and Inception Score (IS)~\cite{NIPS2016_6125}.
We follow the practice of BigGAN to randomly sample 50k images to calculate the IS and FID scores.

As shown in Table~\ref{tab:sample_quality}, SSGAN and SSGAN-MS surpass GAN due to the auxiliary self-supervised tasks that encourage discriminators to learn stable representations to resist the forgetting issue.
As DAGAN suffers from the augmentation-leaking problem~\cite{NEURIPS2020_8d30aa96} in the default setting, we up-weight the probability of the identity transformation for DAGAN (named DAGAN$^+$) to avoid the undesirable augmentation-leaking problem.
DAGAN$^+$ and DAGAN-MD have made considerable improvements over GAN on CIFAR-10 and STL-10 but a substantial degradation on Tiny-ImageNet, indicating that only data augmentation without self-supervised signals cannot bring consistent improvements.
The proposed SSGAN-LA significantly outperforms existing SSGANs and DAGANs by inheriting their advantages while overcoming their shortcomings.
Compared with existing SSGANs, SSGAN-LA inherits the benefits of self-supervised tasks while avoiding the undesired learning objective.
Compared with DAGANs, SSGAN-LA inherits the benefits of augmented data and utilizes self-supervised signals to enhance the representation learning ability of the discriminator that could provide more valuable guidance to the generator.

\begin{table}[tbp]
\caption{Accuracy ($\uparrow$) comparison of methods on CIFAR-10, CIFAR-100, and Tiny-ImageNet based on learned representations extracted from each residual block of the discriminator.}
\centering
\scalebox{0.89}{
\begin{tabular}{ccccccccc}
\toprule
Dataset & Block & GAN & SSGAN & SSGAN-MS & DAGAN$^+$ & DAGAN-MD & SSGAN-LA \\
\midrule
\multirow{4}*{CIFAR-10}
& Block1 & $\mathbf{0.641}$ & $0.636$ & $0.640$ & $0.639$ & $0.633$ & $0.637$ \\
& Block2 & $0.729$ & $0.722$ & $0.728$ & $0.726$ & $0.720$ & $\mathbf{0.741}$ \\
& Block3 & $0.765$ & $0.769$ & $0.770$ & $0.764$ & $0.746$ & $\mathbf{0.782}$ \\
& Block4 & $0.769$ & $0.784$ & $0.787$ & $0.772$ & $0.768$ & $\mathbf{0.803}$ \\
\midrule
\multirow{4}*{CIFAR-100}
& Block1 & $0.323$ & $0.320$ & $0.322$ & $0.323$ & $\mathbf{0.324}$ & $0.318$ \\
& Block2 & $0.456$ & $0.452$ & $0.461$ & $0.460$ & $0.443$ & $\mathbf{0.468}$ \\
& Block3 & $0.494$ & $0.493$ & $0.502$ & $0.493$ & $0.478$ & $\mathbf{0.511}$ \\
& Block4 & $0.504$ & $0.513$ & $0.521$ & $0.519$ & $0.492$ & $\mathbf{0.543}$ \\
\midrule
\multirow{5}*{Tiny-ImageNet}
& Block1 & $0.142$ & $0.138$ & $0.132$ & $\mathbf{0.155}$ & $0.135$ & $0.129$ \\
& Block2 & $0.221$ & $\mathbf{0.224}$ & $0.214$ & $0.184$ & $0.184$ & $0.209$ \\
& Block3 & $0.251$ & $0.277$ & $0.280$ & $0.211$ & $0.231$ & $\mathbf{0.283}$ \\
& Block4 & $0.325$ & $0.346$ & $0.344$ & $0.263$ & $0.279$ & $\mathbf{0.349}$ \\
& Block5 & $0.141$ & $0.313$ & $0.341$ & $0.110$ & $0.224$ & $\mathbf{0.351}$ \\
\bottomrule
\end{tabular}}
\label{tab:representation_quality}
\end{table}

\subsection{Comparison of Representation Quality}
\label{sec:rep_quality}

In order to check whether the discriminator learns meaningful representations, we train a 10-way logistic regression classifier on CIFAR-10, 100-way on CIFAR-100, and 200-way on Tiny-ImageNet, respectively, using the learned representation of real data extracted from residual blocks in the discriminator.
Specifically, we train the classifier with a batch size of 128 for 50 epochs.
The optimizer is Adam with a learning rate of 0.05 and decayed by 10 at both epoch 30 and epoch 40, following the practice of~\cite{Chen_2019_CVPR}.
The linear model is trained on the training set and tested on the validation set of the corresponding datasets.
Classification accuracy is the evaluation metric.

As shown in Table~\ref{tab:representation_quality}, SSGAN and SSGAN-MS obtain improvements on GAN, confirming that the self-supervised tasks facilitate learning more meaningful representations.
DAGAN-MD performs substantially worse than the self-supervised counterparts and even worse than GAN and DAGAN$^+$ in some cases, validating that the multiple discriminators of DAGAN-MD tend to weaken the representation learning ability in the shared blocks.
In other words, the shared blocks are limited to invariant features, which may lose useful information about data.
Compared with all competitive baselines, the proposed SSGAN-LA achieves the best accuracy results on most blocks, especially the deep blocks.
These results verify that the discriminator of SSGAN-LA learns meaningful high-level representations.
We argue that the powerful representation ability of the discriminator could in turn promote the performance of generative modeling of the generator.
\begin{wrapfigure}{r}{0.32\textwidth}
\centering
\vspace{6pt}
\includegraphics[width=0.32\textwidth]{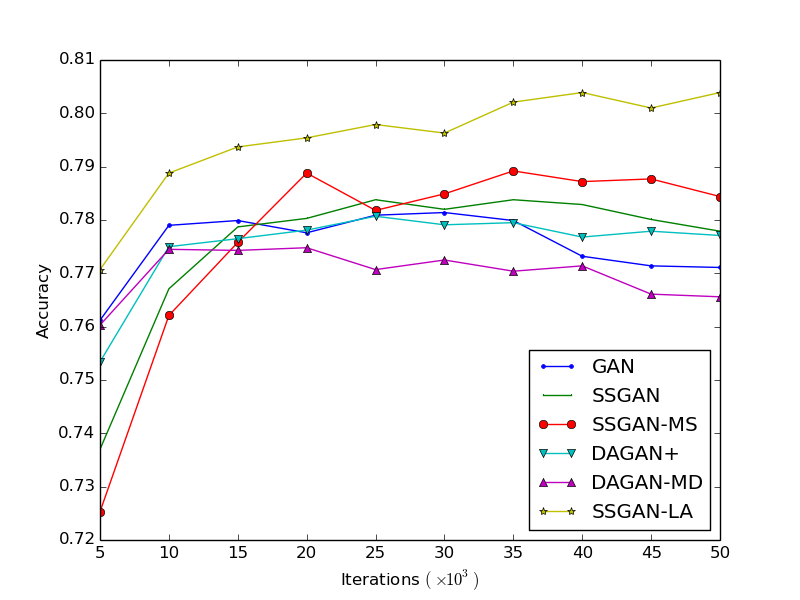}
\caption{Accuracy curves during GAN training on CIFAR-10.}
\label{fig:stability}
\end{wrapfigure}

\subsection{SSGAN-LA Overcomes Catastrophic Forgetting}

Figure~\ref{fig:stability} plots the accuracy (calculated following Section~\ref{sec:rep_quality}) curves of different methods during GAN training on CIFAR-10.
With the increase of GAN training iterations, the accuracy results of GAN and DAGAN-MD first increased and then decreased, indicating that their discriminators both suffer from catastrophic forgetting to some extent.
The proposed SSGAN-LA achieves consecutive increases in terms of accuracy and consistently surpasses other competitive methods, verifying that utilizing self-supervision can effectively overcome the catastrophic forgetting issue.

\clearpage

\subsection{SSGAN-LA Improves the Data Efficiency of GANs}

In this section, we experiment on the full and limited data regimes on CIFAR-10 and CIFAR-100 compared with the state-of-the-art data augmentation method (DiffAugment~\cite{NEURIPS2020_55479c55}) for training GANs.
We adopt the codebase of DiffAugment and follow its practice to randomly sample 10k data for calculating the FID and IS scores for a fair comparison.
As shown in Table~\ref{tab:limited_data}, SSGAN-LA significantly improves the data efficiency of GANs and to our knowledge achieves the new state-of-the-art FID and IS results under the full and limited data regimes on CIFAR-10 and CIFAR-100 based on the BigGAN~\cite{brock2018large} backbone.
We argue that the reason is that the proposed label-augmented discriminator is less easy to over-fitting than the normal discriminator by solving a more challenging task.

\begin{table}[tbp]
\caption{FID and IS comparison under full and limited data regimes on CIFAR-10 and CIFAR-100.}
\centering
\scalebox{0.96}{
\begin{tabular}{clcccccc}
\toprule
\multirow{2}*{Metric} & \multirow{2}*{Method} & \multicolumn{3}{c}{CIFAR-10} & \multicolumn{3}{c}{CIFAR-100} \\
& & 100\% & 20\% & 10\% & 100\% & 20\% & 10\% \\
\midrule
\multirow{2}*{FID} & BigGAN~\cite{brock2018large} + DiffAugment~\cite{NEURIPS2020_55479c55} & $8.70$ & $14.04$ & $22.40$ & $12.00$ & $ 22.14$ & $33.70$ \\
& + SSGAN-LA & $\mathbf{8.05}$ & $\mathbf{11.16}$ & $ \mathbf{15.08}$ & $\mathbf{9.77}$ & $\mathbf{15.91}$ & $\mathbf{23.17}$ \\
\midrule
\multirow{2}*{IS} & BigGAN~\cite{brock2018large} + DiffAugment~\cite{NEURIPS2020_55479c55} & $9.16$ & $8.65$ & $8.09$ & $10.66$ & $9.47$ & $8.38$ \\
& + SSGAN-LA & $\mathbf{9.30}$ & $\mathbf{8.84}$ & $\mathbf{8.30}$ & $\mathbf{11.13}$ & $\mathbf{10.52}$ & $\mathbf{9.81}$ \\
\bottomrule
\end{tabular}}
\label{tab:limited_data}
\end{table}

\section{Related Work}

Since generative adversarial networks (GANs)~\cite{NIPS2014_5423} appearance, numerous variants focus on improving the generation performance from perspectives of network architecture~\cite{radford2015unsupervised,pmlr-v97-zhang19d,chen2018on,brock2018large,Karras_2019_CVPR,Karras_2020_CVPR}, divergence minimization~\cite{NIPS2016_6066,pmlr-v70-arjovsky17a,NIPS2017_7159,NIPS2017_dfd7468a,pmlr-v80-tao18b,pmlr-v119-jolicoeur-martineau20a}, learning procedure~\cite{metz2016unrolled,pmlr-v119-schaefer20a,hou2021slimmable}, weight normalization~\cite{miyato2018spectral,Sanyal2020Stable,pmlr-v119-zhao20d}, model regularization~\cite{petzka2018on,yamaguchi2018distributional,Terjek2020Adversarial}, and multiple discriminators~\cite{NIPS2017_e60e81c4,durugkar2016generative,neyshabur2017stabilizing,doan2019line,pmlr-v97-albuquerque19a}.
Recently, self-supervision has been applied to GANs to improve the training stability.
Self-supervised learning is a family of unsupervised representation learning methods that solve pretext tasks.
Various self-supervised pretext tasks on image representation learning have been proposed, e.g., predicting context~\cite{Doersch_2015_ICCV}, solving jigsaw puzzle~\cite{noroozi2016unsupervised}, and recognizing rotation~\cite{gidaris2018unsupervised}.
\cite{Chen_2019_CVPR} introduced an auxiliary rotation prediction task~\cite{gidaris2018unsupervised} onto the discriminator to alleviate catastrophic forgetting.
\cite{NEURIPS2019_d04cb95b,tran2019improved} analyzed the drawback of the self-supervised task in~\cite{Chen_2019_CVPR} and proposed an improved self-supervised task.
\cite{pmlr-v97-lucic19a} combined self- and semi-supervised learning to surpass fully supervised GANs.
\cite{baykal2020deshufflegan} introduced a deshuffling task to the discriminator that solves the jigsaw puzzle.
\cite{Huang_2020_WACV} required the discriminator to examine the structural consistency of real images.
\cite{Patel_2021_WACV} extended the self-supervision into the latent space by detecting the latent transformation.
\cite{Lee_2021_WACV} explored contrastive learning and mutual information maximization to simultaneously tackle the catastrophic forgetting and mode collapse issues.
\cite{Zhang2020Consistency} presented a transformation-based consistency regularization, which has improved in~\cite{zhao2021improved,zhao2020image}.
The similar label augmentation technique~\cite{pmlr-v119-lee20c} for discriminators is also proposed in DADA~\cite{zhang2019dada} and \cite{pmlr-v139-watanabe21a}.
However, the former (DADA) aimed at data augmentation for few-shot classification, and the latter focused on self-labeling in semi-supervised learning.
Most importantly, they are both different from ours with the loss function for the generator, which is the key for the Theorem~\ref{thm:ssgan-la} that guarantees the generator to faithfully learn the real data distribution.

\section{Conclusion}

In this paper, we present a novel formulation of transformation-based self-supervised GANs that is able to faithfully learn the real data distribution.
Specifically, we unify the original GAN task with the self-supervised task into a single multi-class classification task for the discriminator (forming a label-augmented discriminator) by augmenting the original GAN labels (real or fake) via self-supervision of data transformation.
Consequently, the generator is encouraged to minimize the reversed Kullback–Leibler divergence between the real data distribution and the generated data distribution under different transformations provided by the optimal label-augmented discriminator.
We provide both theoretical analysis and empirical evidence to support the superiority of the proposed method compared with previous self-supervised GANs and data augmentation GANs.
In the future, we will explore and exploit the proposed approach to semi- and fully supervised GANs.

\section*{Acknowledgments and Disclosure of Funding}

This work is funded by the National Natural Science Foundation of China under Grant Nos. 62102402, 91746301, and National Key R\&D Program of China (2020AAA0105200). Huawei Shen is also supported by Beijing Academy of Artificial Intelligence (BAAI) under the grant number BAAI2019QN0304 and K.C. Wong Education Foundation.

\bibliography{neurips_2021}

\begin{thebibliography}{10}

\bibitem{pmlr-v97-albuquerque19a}
I.~Albuquerque, J.~Monteiro, T.~Doan, B.~Considine, T.~Falk, and I.~Mitliagkas.
\newblock Multi-objective training of generative adversarial networks with
  multiple discriminators.
\newblock In {\em Proceedings of the 36th International Conference on Machine
  Learning}, 2019.

\bibitem{pmlr-v70-arjovsky17a}
M.~Arjovsky, S.~Chintala, and L.~Bottou.
\newblock {W}asserstein generative adversarial networks.
\newblock In {\em Proceedings of the 34th International Conference on Machine
  Learning}, 2017.

\bibitem{baykal2020deshufflegan}
G.~Baykal and G.~Unal.
\newblock Deshufflegan: A self-supervised gan to improve structure learning.
\newblock In {\em 2020 IEEE International Conference on Image Processing
  (ICIP)}, pages 708--712. IEEE, 2020.

\bibitem{brock2018large}
A.~Brock, J.~Donahue, and K.~Simonyan.
\newblock Large scale {GAN} training for high fidelity natural image synthesis.
\newblock In {\em International Conference on Learning Representations}, 2019.

\bibitem{chen2018on}
T.~Chen, M.~Lucic, N.~Houlsby, and S.~Gelly.
\newblock On self modulation for generative adversarial networks.
\newblock In {\em International Conference on Learning Representations}, 2019.

\bibitem{Chen_2019_CVPR}
T.~Chen, X.~Zhai, M.~Ritter, M.~Lucic, and N.~Houlsby.
\newblock Self-supervised gans via auxiliary rotation loss.
\newblock In {\em Proceedings of the IEEE/CVF Conference on Computer Vision and
  Pattern Recognition (CVPR)}, 2019.

\bibitem{chen2016infogan}
X.~Chen, Y.~Duan, R.~Houthooft, J.~Schulman, I.~Sutskever, and P.~Abbeel.
\newblock Infogan: Interpretable representation learning by information
  maximizing generative adversarial nets.
\newblock {\em Advances in neural information processing systems}, 2016.

\bibitem{coates2011analysis}
A.~Coates, A.~Ng, and H.~Lee.
\newblock An analysis of single-layer networks in unsupervised feature
  learning.
\newblock In {\em Proceedings of the fourteenth international conference on
  artificial intelligence and statistics}, pages 215--223. JMLR Workshop and
  Conference Proceedings, 2011.

\bibitem{NIPS2017_7237}
H.~de~Vries, F.~Strub, J.~Mary, H.~Larochelle, O.~Pietquin, and A.~C.
  Courville.
\newblock Modulating early visual processing by language.
\newblock In {\em Advances in Neural Information Processing Systems 30}. 2017.

\bibitem{doan2019line}
T.~Doan, J.~Monteiro, I.~Albuquerque, B.~Mazoure, A.~Durand, J.~Pineau, and
  R.~D. Hjelm.
\newblock On-line adaptative curriculum learning for gans.
\newblock In {\em Proceedings of the AAAI Conference on Artificial
  Intelligence}, volume~33, pages 3470--3477, 2019.

\bibitem{Doersch_2015_ICCV}
C.~Doersch, A.~Gupta, and A.~A. Efros.
\newblock Unsupervised visual representation learning by context prediction.
\newblock In {\em Proceedings of the IEEE International Conference on Computer
  Vision (ICCV)}, 2015.

\bibitem{durugkar2016generative}
I.~Durugkar, I.~Gemp, and S.~Mahadevan.
\newblock Generative multi-adversarial networks.
\newblock {\em arXiv preprint arXiv:1611.01673}, 2016.

\bibitem{gidaris2018unsupervised}
S.~Gidaris, P.~Singh, and N.~Komodakis.
\newblock Unsupervised representation learning by predicting image rotations.
\newblock In {\em International Conference on Learning Representations}, 2018.

\bibitem{NIPS2014_5423}
I.~Goodfellow, J.~Pouget-Abadie, M.~Mirza, B.~Xu, D.~Warde-Farley, S.~Ozair,
  A.~Courville, and Y.~Bengio.
\newblock Generative adversarial nets.
\newblock In {\em Advances in Neural Information Processing Systems 27}. 2014.

\bibitem{gretton2012kernel}
A.~Gretton, K.~M. Borgwardt, M.~J. Rasch, B.~Sch{\"o}lkopf, and A.~Smola.
\newblock A kernel two-sample test.
\newblock {\em The Journal of Machine Learning Research}, 2012.

\bibitem{NIPS2017_7159}
I.~Gulrajani, F.~Ahmed, M.~Arjovsky, V.~Dumoulin, and A.~C. Courville.
\newblock Improved training of wasserstein gans.
\newblock In {\em Advances in Neural Information Processing Systems 30}. 2017.

\bibitem{NIPS2017_7240}
M.~Heusel, H.~Ramsauer, T.~Unterthiner, B.~Nessler, and S.~Hochreiter.
\newblock Gans trained by a two time-scale update rule converge to a local nash
  equilibrium.
\newblock In {\em Advances in Neural Information Processing Systems 30}. 2017.

\bibitem{hou2021slimmable}
L.~Hou, Z.~Yuan, L.~Huang, H.~Shen, X.~Cheng, and C.~Wang.
\newblock Slimmable generative adversarial networks.
\newblock In {\em Proceedings of the AAAI Conference on Artificial
  Intelligence}, volume~35, pages 7746--7753, 2021.

\bibitem{Huang_2020_WACV}
R.~Huang, W.~Xu, T.-Y. Lee, A.~Cherian, Y.~Wang, and T.~Marks.
\newblock Fx-gan: Self-supervised gan learning via feature exchange.
\newblock In {\em Proceedings of the IEEE/CVF Winter Conference on Applications
  of Computer Vision (WACV)}, March 2020.

\bibitem{pmlr-v37-ioffe15}
S.~Ioffe and C.~Szegedy.
\newblock Batch normalization: Accelerating deep network training by reducing
  internal covariate shift.
\newblock In F.~Bach and D.~Blei, editors, {\em Proceedings of the 32nd
  International Conference on Machine Learning}, volume~37 of {\em Proceedings
  of Machine Learning Research}, pages 448--456, Lille, France, 07--09 Jul
  2015. PMLR.

\bibitem{pmlr-v119-jolicoeur-martineau20a}
A.~Jolicoeur-Martineau.
\newblock On relativistic f-divergences.
\newblock In {\em Proceedings of the 37th International Conference on Machine
  Learning}, 2020.

\bibitem{NEURIPS2020_8d30aa96}
T.~Karras, M.~Aittala, J.~Hellsten, S.~Laine, J.~Lehtinen, and T.~Aila.
\newblock Training generative adversarial networks with limited data.
\newblock In H.~Larochelle, M.~Ranzato, R.~Hadsell, M.~F. Balcan, and H.~Lin,
  editors, {\em Advances in Neural Information Processing Systems}, volume~33,
  pages 12104--12114. Curran Associates, Inc., 2020.

\bibitem{Karras_2019_CVPR}
T.~Karras, S.~Laine, and T.~Aila.
\newblock A style-based generator architecture for generative adversarial
  networks.
\newblock In {\em The IEEE Conference on Computer Vision and Pattern
  Recognition (CVPR)}, 2019.

\bibitem{Karras_2020_CVPR}
T.~Karras, S.~Laine, M.~Aittala, J.~Hellsten, J.~Lehtinen, and T.~Aila.
\newblock Analyzing and improving the image quality of stylegan.
\newblock In {\em IEEE/CVF Conference on Computer Vision and Pattern
  Recognition (CVPR)}, 2020.

\bibitem{kirkpatrick2017overcoming}
J.~Kirkpatrick, R.~Pascanu, N.~Rabinowitz, J.~Veness, G.~Desjardins, A.~A.
  Rusu, K.~Milan, J.~Quan, T.~Ramalho, A.~Grabska-Barwinska, et~al.
\newblock Overcoming catastrophic forgetting in neural networks.
\newblock {\em Proceedings of the national academy of sciences}, 2017.

\bibitem{krizhevsky2009learning}
A.~Krizhevsky, G.~Hinton, et~al.
\newblock Learning multiple layers of features from tiny images.
\newblock 2009.

\bibitem{le2015tiny}
Y.~Le and X.~Yang.
\newblock Tiny imagenet visual recognition challenge.
\newblock {\em CS 231N}, 7(7):3, 2015.

\bibitem{pmlr-v119-lee20c}
H.~Lee, S.~J. Hwang, and J.~Shin.
\newblock Self-supervised label augmentation via input transformations.
\newblock In {\em Proceedings of the 37th International Conference on Machine
  Learning}, 2020.

\bibitem{Lee_2021_WACV}
K.~S. Lee, N.-T. Tran, and N.-M. Cheung.
\newblock Infomax-gan: Improved adversarial image generation via information
  maximization and contrastive learning.
\newblock In {\em Proceedings of the IEEE/CVF Winter Conference on Applications
  of Computer Vision (WACV)}, pages 3942--3952, January 2021.

\bibitem{NIPS2017_dfd7468a}
C.-L. Li, W.-C. Chang, Y.~Cheng, Y.~Yang, and B.~Poczos.
\newblock Mmd gan: Towards deeper understanding of moment matching network.
\newblock In {\em Advances in Neural Information Processing Systems}, 2017.

\bibitem{liese2006divergences}
F.~Liese and I.~Vajda.
\newblock On divergences and informations in statistics and information theory.
\newblock {\em IEEE Transactions on Information Theory}, 52(10):4394--4412,
  2006.

\bibitem{lim2017geometric}
J.~H. Lim and J.~C. Ye.
\newblock Geometric gan.
\newblock {\em arXiv preprint arXiv:1705.02894}, 2017.

\bibitem{NEURIPS2018_288cc0ff}
Z.~Lin, A.~Khetan, G.~Fanti, and S.~Oh.
\newblock Pacgan: The power of two samples in generative adversarial networks.
\newblock In {\em Advances in Neural Information Processing Systems}, 2018.

\bibitem{liu2015faceattributes}
Z.~Liu, P.~Luo, X.~Wang, and X.~Tang.
\newblock Deep learning face attributes in the wild.
\newblock In {\em Proceedings of International Conference on Computer Vision
  (ICCV)}, December 2015.

\bibitem{pmlr-v97-lucic19a}
M.~Lu{\v{c}}i{\'c}, M.~Tschannen, M.~Ritter, X.~Zhai, O.~Bachem, and S.~Gelly.
\newblock High-fidelity image generation with fewer labels.
\newblock In {\em Proceedings of the 36th International Conference on Machine
  Learning}, 2019.

\bibitem{Mao_2019_CVPR}
Q.~Mao, H.-Y. Lee, H.-Y. Tseng, S.~Ma, and M.-H. Yang.
\newblock Mode seeking generative adversarial networks for diverse image
  synthesis.
\newblock In {\em Proceedings of the IEEE/CVF Conference on Computer Vision and
  Pattern Recognition (CVPR)}, 2019.

\bibitem{metz2016unrolled}
L.~Metz, B.~Poole, D.~Pfau, and J.~Sohl-Dickstein.
\newblock Unrolled generative adversarial networks.
\newblock {\em arXiv preprint arXiv:1611.02163}, 2016.

\bibitem{miyato2018spectral}
T.~Miyato, T.~Kataoka, M.~Koyama, and Y.~Yoshida.
\newblock Spectral normalization for generative adversarial networks.
\newblock In {\em International Conference on Learning Representations}, 2018.

\bibitem{miyato2018cgans}
T.~Miyato and M.~Koyama.
\newblock c{GAN}s with projection discriminator.
\newblock In {\em International Conference on Learning Representations}, 2018.

\bibitem{neyshabur2017stabilizing}
B.~Neyshabur, S.~Bhojanapalli, and A.~Chakrabarti.
\newblock Stabilizing gan training with multiple random projections.
\newblock {\em arXiv preprint arXiv:1705.07831}, 2017.

\bibitem{NIPS2017_e60e81c4}
T.~Nguyen, T.~Le, H.~Vu, and D.~Phung.
\newblock Dual discriminator generative adversarial nets.
\newblock In {\em Advances in Neural Information Processing Systems}, 2017.

\bibitem{noroozi2016unsupervised}
M.~Noroozi and P.~Favaro.
\newblock Unsupervised learning of visual representations by solving jigsaw
  puzzles.
\newblock In {\em European Conference on Computer Vision}, 2016.

\bibitem{NIPS2016_6066}
S.~Nowozin, B.~Cseke, and R.~Tomioka.
\newblock f-gan: Training generative neural samplers using variational
  divergence minimization.
\newblock In {\em Advances in Neural Information Processing Systems 29}. 2016.

\bibitem{parzen1962estimation}
E.~Parzen.
\newblock On estimation of a probability density function and mode.
\newblock {\em The annals of mathematical statistics}, 1962.

\bibitem{Patel_2021_WACV}
P.~Patel, N.~Kumari, M.~Singh, and B.~Krishnamurthy.
\newblock Lt-gan: Self-supervised gan with latent transformation detection.
\newblock In {\em Proceedings of the IEEE/CVF Winter Conference on Applications
  of Computer Vision (WACV)}, pages 3189--3198, January 2021.

\bibitem{petzka2018on}
H.~Petzka, A.~Fischer, and D.~Lukovnikov.
\newblock On the regularization of wasserstein {GAN}s.
\newblock In {\em International Conference on Learning Representations}, 2018.

\bibitem{qiao2010study}
Y.~Qiao and N.~Minematsu.
\newblock A study on invariance of $ f $-divergence and its application to
  speech recognition.
\newblock {\em IEEE Transactions on Signal Processing}, 58(7):3884--3890, 2010.

\bibitem{radford2015unsupervised}
A.~Radford, L.~Metz, and S.~Chintala.
\newblock Unsupervised representation learning with deep convolutional
  generative adversarial networks.
\newblock {\em arXiv preprint arXiv:1511.06434}, 2015.

\bibitem{NIPS2016_6125}
T.~Salimans, I.~Goodfellow, W.~Zaremba, V.~Cheung, A.~Radford, X.~Chen, and
  X.~Chen.
\newblock Improved techniques for training gans.
\newblock In {\em Advances in Neural Information Processing Systems 29}. 2016.

\bibitem{Sanyal2020Stable}
A.~Sanyal, P.~H. Torr, and P.~K. Dokania.
\newblock Stable rank normalization for improved generalization in neural
  networks and gans.
\newblock In {\em International Conference on Learning Representations}, 2020.

\bibitem{pmlr-v119-schaefer20a}
F.~Schaefer, H.~Zheng, and A.~Anandkumar.
\newblock Implicit competitive regularization in {GAN}s.
\newblock In {\em Proceedings of the 37th International Conference on Machine
  Learning}, 2020.

\bibitem{NIPS2017_44a2e080}
A.~Srivastava, L.~Valkov, C.~Russell, M.~U. Gutmann, and C.~Sutton.
\newblock Veegan: Reducing mode collapse in gans using implicit variational
  learning.
\newblock In {\em Advances in Neural Information Processing Systems}, 2017.

\bibitem{pmlr-v80-tao18b}
C.~Tao, L.~Chen, R.~Henao, J.~Feng, and L.~C. Duke.
\newblock {C}hi-square generative adversarial network.
\newblock In {\em Proceedings of the 35th International Conference on Machine
  Learning}, 2018.

\bibitem{Terjek2020Adversarial}
D.~Terjék.
\newblock Adversarial lipschitz regularization.
\newblock In {\em International Conference on Learning Representations}, 2020.

\bibitem{thanh2020catastrophic}
H.~Thanh-Tung and T.~Tran.
\newblock Catastrophic forgetting and mode collapse in gans.
\newblock In {\em 2020 International Joint Conference on Neural Networks
  (IJCNN)}, pages 1--10. IEEE, 2020.

\bibitem{tran2017deep}
D.~Tran, R.~Ranganath, and D.~M. Blei.
\newblock Deep and hierarchical implicit models.
\newblock {\em arXiv preprint arXiv:1702.08896}, 2017.

\bibitem{NEURIPS2019_d04cb95b}
N.-T. Tran, V.-H. Tran, B.-N. Nguyen, L.~Yang, and N.-M.~M. Cheung.
\newblock Self-supervised gan: Analysis and improvement with multi-class
  minimax game.
\newblock In {\em Advances in Neural Information Processing Systems}, 2019.

\bibitem{tran2019improved}
N.-T. Tran, V.-H. Tran, N.-B. Nguyen, and N.-M. Cheung.
\newblock An improved self-supervised gan via adversarial training.
\newblock {\em arXiv preprint arXiv:1905.05469}, 2019.

\bibitem{tran2021data}
N.-T. Tran, V.-H. Tran, N.-B. Nguyen, T.-K. Nguyen, and N.-M. Cheung.
\newblock On data augmentation for gan training.
\newblock {\em IEEE Transactions on Image Processing}, 30:1882--1897, 2021.

\bibitem{pmlr-v139-watanabe21a}
T.~Watanabe and P.~Favaro.
\newblock A unified generative adversarial network training via self-labeling
  and self-attention.
\newblock In M.~Meila and T.~Zhang, editors, {\em Proceedings of the 38th
  International Conference on Machine Learning}, volume 139 of {\em Proceedings
  of Machine Learning Research}, pages 11024--11034. PMLR, 18--24 Jul 2021.

\bibitem{NEURIPS2018_8a3363ab}
C.~Xiao, P.~Zhong, and C.~Zheng.
\newblock Bourgan: Generative networks with metric embeddings.
\newblock In {\em Advances in Neural Information Processing Systems}, 2018.

\bibitem{yamaguchi2018distributional}
S.~Yamaguchi and M.~Koyama.
\newblock Distributional concavity regularization for {GAN}s.
\newblock In {\em International Conference on Learning Representations}, 2019.

\bibitem{pmlr-v97-zhang19d}
H.~Zhang, I.~Goodfellow, D.~Metaxas, and A.~Odena.
\newblock Self-attention generative adversarial networks.
\newblock In {\em Proceedings of the 36th International Conference on Machine
  Learning}, 2019.

\bibitem{Zhang2020Consistency}
H.~Zhang, Z.~Zhang, A.~Odena, and H.~Lee.
\newblock Consistency regularization for generative adversarial networks.
\newblock In {\em International Conference on Learning Representations}, 2020.

\bibitem{zhang2019dada}
X.~Zhang, Z.~Wang, D.~Liu, and Q.~Ling.
\newblock Dada: Deep adversarial data augmentation for extremely low data
  regime classification.
\newblock In {\em ICASSP 2019-2019 IEEE International Conference on Acoustics,
  Speech and Signal Processing (ICASSP)}, pages 2807--2811. IEEE, 2019.

\bibitem{NEURIPS2020_55479c55}
S.~Zhao, Z.~Liu, J.~Lin, J.-Y. Zhu, and S.~Han.
\newblock Differentiable augmentation for data-efficient gan training.
\newblock In H.~Larochelle, M.~Ranzato, R.~Hadsell, M.~F. Balcan, and H.~Lin,
  editors, {\em Advances in Neural Information Processing Systems}, volume~33,
  pages 7559--7570. Curran Associates, Inc., 2020.

\bibitem{pmlr-v119-zhao20d}
Y.~Zhao, C.~Li, P.~Yu, J.~Gao, and C.~Chen.
\newblock Feature quantization improves {GAN} training.
\newblock In {\em Proceedings of the 37th International Conference on Machine
  Learning}, 2020.

\bibitem{zhao2021improved}
Z.~Zhao, S.~Singh, H.~Lee, Z.~Zhang, A.~Odena, and H.~Zhang.
\newblock Improved consistency regularization for gans.
\newblock In {\em Proceedings of the AAAI Conference on Artificial
  Intelligence}, volume~35, pages 11033--11041, 2021.

\bibitem{zhao2020image}
Z.~Zhao, Z.~Zhang, T.~Chen, S.~Singh, and H.~Zhang.
\newblock Image augmentations for gan training.
\newblock {\em arXiv preprint arXiv:2006.02595}, 2020.

\bibitem{pmlr-v97-zhou19c}
Z.~Zhou, J.~Liang, Y.~Song, L.~Yu, H.~Wang, W.~Zhang, Y.~Yu, and Z.~Zhang.
\newblock {L}ipschitz generative adversarial nets.
\newblock In {\em Proceedings of the 36th International Conference on Machine
  Learning}, 2019.

\end{thebibliography}
\bibliographystyle{abbrv}


\clearpage

\appendix

\section{Proofs}
\label{sec:proofs}

In order to mathematically analyze transformation-based self-supervised (or data augmentation) GANs, we need to rewrite the objective functions that are easy to calculate derivatives.
Considering that a data $x\in\mathcal{X}$ and a transformation $T_k\in\mathcal{T}$ are independent from each other and the transformed data $\tilde{x}=T_k(x)\in\tilde{\mathcal{X}}=\mathcal{T}(\mathcal{X})$ is deterministic depended on both $x$ and $T_k$, the density of the joint distribution is $p(\tilde{x},x,T_k)=p(x)p(T_k)p(\tilde{x}|x,T_k)=p(x)p(T_k)\delta(\tilde{x}-T_k(x))$ with the indicator function $\delta(0)=1$ and $\delta(\tilde{x})=0, \forall \tilde{x}\neq 0$.

\begin{restatable}{pro}{base}\label{pro:base}
For any continuous and differentiable function $f$ whose domain is $\tilde{\mathcal{X}}$, we have:
\begin{align}
    \mathbb{E}_{x\sim \mathcal{P},T_k\sim\mathcal{T}}[\log f(T_k(x))]=\mathbb{E}_{\tilde{x}\sim \mathcal{P}^T,T_k\sim \mathcal{T}^{\tilde{x}}}[\log f(\tilde{x})]=\mathbb{E}_{T_k\sim \mathcal{T},\tilde{x}\sim \mathcal{P}^{T_k}}[\log f(\tilde{x})],
\end{align}
where $\mathcal{P}$ denotes the original data distribution, $\mathcal{P}^T$ indicates the mixture distribution of transformed data, $\mathcal{P}^{T_k}$ means the distribution of transformed data given the transformation $T_k$, and $\mathcal{T}^{\tilde{x}}$ represents the distribution of transformation given the transformed data $\tilde{x}$.
\end{restatable}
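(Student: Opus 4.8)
The plan is to recognize all three expressions as a single expectation taken over the joint distribution $p(\tilde{x},x,T_k)=p(x)p(T_k)\delta(\tilde{x}-T_k(x))$ recalled just above the statement, and then to recover the two right-hand sides by marginalizing out $x$ and factoring the remaining joint marginal on $(\tilde{x},T_k)$ in the two orders allowed by the product rule.

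First I would introduce the auxiliary variable $\tilde{x}$ into the left-most expression through the sifting property of the Dirac delta, namely $\log f(T_k(x))=\int \delta(\tilde{x}-T_k(x))\log f(\tilde{x})\,d\tilde{x}$, which is legitimate since $f$ is continuous on $\tilde{\mathcal{X}}$. Writing the expectation over the finite transformation set $\mathcal{T}$ as a sum over $k$ weighted by $p(T_k)$, the left-most side becomes
\begin{align}
\sum_{k=1}^K p(T_k)\int p(x)\int \delta(\tilde{x}-T_k(x))\log f(\tilde{x})\,d\tilde{x}\,dx = \sum_{k=1}^K\int\int p(\tilde{x},x,T_k)\log f(\tilde{x})\,d\tilde{x}\,dx,
\end{align}
which is precisely $\mathbb{E}_{(\tilde{x},x,T_k)}[\log f(\tilde{x})]$, the expectation over the full joint distribution.

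Since the integrand $\log f(\tilde{x})$ depends only on $\tilde{x}$, I would next marginalize out $x$ to obtain the joint marginal $p(\tilde{x},T_k)=\int p(\tilde{x},x,T_k)\,dx = p(T_k)\int \delta(\tilde{x}-T_k(x))p(x)\,dx = p(T_k)\,p^{T_k}(\tilde{x})$, using exactly the transformed density $p^{T_k}$ defined in Theorem~\ref{thm:ssgan}. Factoring this marginal in the two permitted ways then delivers the two right-hand sides: writing $p(\tilde{x},T_k)=p(T_k)\,p(\tilde{x}\mid T_k)$ with $p(\tilde{x}\mid T_k)=p^{T_k}(\tilde{x})$ recovers the right-most form $\mathbb{E}_{T_k\sim\mathcal{T},\,\tilde{x}\sim\mathcal{P}^{T_k}}[\log f(\tilde{x})]$, whereas writing $p(\tilde{x},T_k)=p(\tilde{x})\,p(T_k\mid\tilde{x})$ with mixture marginal $p(\tilde{x})=\sum_{k=1}^K p(T_k)p^{T_k}(\tilde{x})=p^T(\tilde{x})$ and conditional $p(T_k\mid\tilde{x})$ identified with $\mathcal{T}^{\tilde{x}}$ recovers the middle form $\mathbb{E}_{\tilde{x}\sim\mathcal{P}^T,\,T_k\sim\mathcal{T}^{\tilde{x}}}[\log f(\tilde{x})]$.

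The only delicate point is the handling of the Dirac delta: I must justify exchanging the $\tilde{x}$-integral with the finite sum and the $x$-integral (a routine Fubini/Tonelli step, valid as long as $\log f$ is integrable against each measure), and confirm that the marginals produced by integrating the formal density $\delta(\tilde{x}-T_k(x))p(x)$ coincide with the pushforward distributions $\mathcal{P}^{T_k}$ and $\mathcal{P}^T$ named in the earlier theorems. Everything else is bookkeeping via the product rule of probability; the essence of the claim is simply that the three expressions are three equivalent ways of writing one joint expectation.
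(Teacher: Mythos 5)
Your proposal is correct and follows essentially the same route as the paper's proof: insert $\tilde{x}$ via the sifting property of the Dirac delta, recognize the resulting integrand as the joint density $p(\tilde{x},x,T_k)$, marginalize out $x$, and factor $p(\tilde{x},T_k)$ in the two orders to obtain the two right-hand sides. The only difference is that you make explicit the Fubini/integrability caveat and the identification of the formal delta-marginals with the pushforward distributions, which the paper leaves implicit.
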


\begin{proof}
\begin{align}
    &\mathbb{E}_{x\sim P,T_k\sim\mathcal{T}}[\log f(T_k(x))] \\
    =&\int p(x)\sum_{k=1}^{K} p(T_k)\log f(T_k(x))dx \\
    =&\int p(x)\sum_{k=1}^{K} p(T_k) \delta(\tilde{x}-T_k(x))\log f(\tilde{x})dx d\tilde{x} \\
    =&\int p(x)\sum_{k=1}^{K} p(T_k) p(\tilde{x}|x,T_k)\log f(\tilde{x})dx d\tilde{x} \\
    =&\int\sum_{k=1}^{K} p(\tilde{x},x,T_k) \log f(\tilde{x})dxd\tilde{x} \\
    =&\int\sum_{k=1}^{K} p(\tilde{x},T_k) \log f(\tilde{x})d\tilde{x} \\
    =&\int p^T(\tilde{x}) \sum_{k=1}^{K} p(T_k|\tilde{x}) \log f(\tilde{x})d\tilde{x}=\mathbb{E}_{\tilde{x}\sim P^T,T_k\sim \mathcal{T}^{\tilde{x}}}[\log f(\tilde{x})] \\
    =&\sum_{k=1}^K \int p(T_k) p(\tilde{x}|T_k) \log f(\tilde{x})d\tilde{x}=\mathbb{E}_{T_k\sim\mathcal{T},\tilde{x}\sim\mathcal{P}^{T_k}}[\log f(\tilde{x})].
\end{align}
\end{proof}

\subsection{Proof of Theorem~\ref{thm:ssgan}}
\label{sec:proof-ssgan}

Theorem~\ref{thm:ssgan} was proved in the SSGAN-MS paper~\cite{NEURIPS2019_d04cb95b}.
We here give a brief proof for completeness of this paper.
Readers are encouraged to refer to the original proof in ~\cite{NEURIPS2019_d04cb95b} for more details.

\ssgan*

\begin{proof}
According to Proposition~\ref{pro:base}, the objective function of the self-supervised task for the classifier of SSGAN can be rewritten as follows:
\begin{align}
    \max_{C}\mathbb{E}_{x\sim \mathcal{P}_d,T_k\sim\mathcal{T}}[\log C(k|T_k(x))]\Rightarrow\max_{C}\mathbb{E}_{\tilde{x}\sim \mathcal{P}_d^T,T_k\sim\mathcal{T}_d^{\tilde{x}}}[\log C(k|\tilde{x})].
\end{align}

According to the Proposition 1 in~\cite{NEURIPS2019_d04cb95b}, the optimal classifier $C^*$ has the form of:
\begin{align}
    C^*(k|\tilde{x})=\frac{p_d^{T_k}(\tilde{x})}{\sum_{k=1}^K p_d^{T_k}(\tilde{x})}.
\end{align}

Therefore, the objective function of the self-supervised task for the generator of SSGAN, under the optimal classifier, can be considered as the following objective:
\begin{align}
    &\max_G \mathbb{E}_{x\sim \mathcal{P}_g,T_k\sim \mathcal{T}}[\log C^*(k|T_k(x))] \\
    \Rightarrow &\max_G \mathbb{E}_{T_k\sim \mathcal{T},\tilde{x}\sim \mathcal{P}_g^{T_k}}[\log C^*(k|\tilde{x})] \\
    \Rightarrow &\max_G \frac{1}{K}\sum_{k=1}^K \left[\mathbb{E}_{\tilde{x}\sim\mathcal{P}_g^{T_k}}\log \left(\frac{p_d^{T_k}(\tilde{x})}{\sum_{k=1}^K p_d^{T_k}(\tilde{x})}\right)\right].
\end{align}
\end{proof}

\subsection{Proof of Theorem~\ref{thm:ssgan-ms}}
\label{sec:proof-ssgan-ms}

\ssganms*

\begin{proof}
According to Proposition~\ref{pro:base}, we first rewrite the objective function of the self-supervised task for the classifier of SSGAN-MS as follows:
\begin{align}
    &\max_{C_+}\mathbb{E}_{x\sim \mathcal{P}_d,T_k\sim\mathcal{T}}[\log C_+(k|T_k(x))]+\mathbb{E}_{x\sim \mathcal{P}_g,T_k\sim\mathcal{T}}[\log C_+(0|T_k(x))] \\
    \Rightarrow&\max_{C_+}\mathbb{E}_{\tilde{x}\sim \mathcal{P}_d^T,T_k\sim\mathcal{T}_d^{\tilde{x}}}[\log C_+(k|\tilde{x})]+\mathbb{E}_{\tilde{x}\sim \mathcal{P}_g^T,T_k\sim\mathcal{T}_g^{\tilde{x}}}[\log C_+(0|\tilde{x})].
\end{align}

According to the Proposition 2 of~\cite{NEURIPS2019_d04cb95b}, for any fixed generator, the optimal classifier $C_+^*$ is:
\begin{align}
    C_+^*(k|\tilde{x})=\frac{p_d^T(\tilde{x})}{p_g^T(\tilde{x})}\frac{p_d^{T_k}(\tilde{x})}{\sum_{k=1}^K p_d^{T_k}(\tilde{x})}C_+^*(0|\tilde{x}),\forall k\in\{1,2,\cdots,K\}.
\end{align}

Since $\sum_{k=0}^{K}C_+^*(k|\tilde{x})=1$ for each transformed data $\tilde{x}\in\tilde{\mathcal{X}}$, we have:
\begin{align}
    C_+^*(0|\tilde{x})&=\frac{p_g^T(\tilde{x})}{p_d^T(\tilde{x})+p_g^T(\tilde{x})}, \\
    C_+^*(k|\tilde{x})&=\frac{p_d^{T}(\tilde{x})}{p_d^T(\tilde{x})+p_g^T(\tilde{x})}\frac{p_d^{T_k}(\tilde{x})}{\sum_{k=1}^K p_d^{T_k}(\tilde{x})}, \forall k\in\{1,2,\cdots,K\}.
\end{align}

The self-supervised task for the generator of SSGAN-MS, under the optimal classifier, is equal to:
\begin{align}
    &\max_{G}\mathbb{E}_{x\sim\mathcal{P}_g,T_k\sim\mathcal{T}}[\log C_+^*(k|T_k(x))] - \mathbb{E}_{x\sim\mathcal{P}_g,T_k\sim\mathcal{T}}[\log C_+^*(0|T_k(x))] \\
    \Rightarrow&\max_{G}\mathbb{E}_{\tilde{x}\sim\mathcal{P}_g^T,T_k\sim\mathcal{T}_g^{\tilde{x}}}[\log C_+^*(k|\tilde{x})] - \mathbb{E}_{\tilde{x}\sim\mathcal{P}_g^T,T_k\sim\mathcal{T}_g^{\tilde{x}}}[\log C_+^*(0|\tilde{x})] \\
    \Rightarrow&\max_{G}\mathbb{E}_{\tilde{x}\sim\mathcal{P}_g^T,T_k\sim\mathcal{T}_g^{\tilde{x}}}\left[\log \frac{C_+^*(k|\tilde{x})}{C_+^*(0|\tilde{x})}\right] \\
    \Rightarrow&\max_G\int p_g^T(\tilde{x})\sum_{k=1}^{K} p_g(T_k|\tilde{x}) \log \left(\frac{p_d^T(\tilde{x})}{p_g^T(\tilde{x})}\frac{p_d^{T_k}(\tilde{x})}{\sum_{k=1}^K p_d^{T_k}(\tilde{x})}\right) d\tilde{x} \\
    \Rightarrow&\max_{G}\int p_g^T(\tilde{x}) \log\left(\frac{p_d^T(\tilde{x})}{p_g^T(\tilde{x})}\right)d\tilde{x} + \int p_g^T(\tilde{x})\sum_{k=1}^K p_g(T_K|\tilde{x})\log\left(\frac{p_d^{T_k}(\tilde{x})}{\sum_{k=1}^K p_d^{T_k}(\tilde{x})}\right)d\tilde{x} \\
    \Rightarrow&\max_{G}\int p_g^T(\tilde{x}) \log\left(\frac{p_d^T(\tilde{x})}{p_g^T(\tilde{x})}\right)d\tilde{x} + \sum_{k=1}^K p(T_K) \int p_g^{T_k}(\tilde{x})\log\left(\frac{p_d^{T_k}(\tilde{x})}{\sum_{k=1}^K p_d^{T_k}(\tilde{x})}\right)d\tilde{x} \\
    \Rightarrow&\min_{G}\mathbb{D}_\mathrm{KL}(\mathcal{P}_g^{T}\|\mathcal{P}_d^{T})-\frac{1}{K}\sum_{k=1}^K\left[\mathbb{E}_{\tilde{x}\sim P_g^{T_k}}\log \left(\frac{p_d^{T_k}(\tilde{x})}{\sum_{k=1}^K p_d^{T_k}(\tilde{x})}\right)\right].
\end{align}

\end{proof}

\subsection{Proof of Proposition~\ref{pro:ssgan-la}}
\label{sec:proof-ssgan-la-D}

\ssganlaD*

\begin{proof}

We can first rewrite the objective function for the label-augmented discriminator as follows:
\begin{align}
    &\max_{D_\mathrm{LA}}\mathbb{E}_{x\sim \mathcal{P}_d,T_k\sim\mathcal{T}}[\log D_\mathrm{LA}(k,1|T_k(x))]+\mathbb{E}_{x\sim \mathcal{P}_g,T_k\sim\mathcal{T}}[\log D_\mathrm{LA}(k,0|T_k(x))]\\
    \Rightarrow&\max_{D_\mathrm{LA}}\mathbb{E}_{\tilde{x}\sim \mathcal{P}_d^T,T_k\sim \mathcal{T}_d^{\tilde{x}}}[\log D_\mathrm{LA}(k,1|\tilde{x})]+\mathbb{E}_{\tilde{x}\sim \mathcal{P}_g^T,T_k\sim \mathcal{T}_g^{\tilde{x}}}[\log D_\mathrm{LA}(k,0|\tilde{x})]\\
    \Rightarrow&\max_{D_\mathrm{LA}} \int p_d^T(\tilde{x})\sum_{k=1}^{K} p_d(T_k|\tilde{x}) \log D_\mathrm{LA}(k,1|\tilde{x}) + p_g^T(\tilde{x})\sum_{k=1}^{K} p_g(T_k|\tilde{x}) \log D_\mathrm{LA}(k,0|\tilde{x})d\tilde{x}.
\end{align}

Maximizing this integral is equivalent to maximize the component for every transformed data $\tilde{x}\in\tilde{\mathcal{X}}$:
\begin{align}
    & \max_{D_\mathrm{LA}} p_d^T(\tilde{x})\sum_{k=1}^{K} p_d(T_k|\tilde{x}) \log D_\mathrm{LA}(k,1|\tilde{x}) + p_g^T(\tilde{x})\sum_{k=1}^{K} p_g(T_k|\tilde{x}) \log D_\mathrm{LA}(k,0|\tilde{x}), \\
    & \mathrm{s.t.} \sum_{k=1}^K D_\mathrm{LA}(k,1|\tilde{x})+D_\mathrm{LA}(k,0|\tilde{x})=1.
\end{align}

Define the Lagrange function as follows:
\begin{align}
    \mathcal{L}(D_\mathrm{LA},\lambda) = \sum_{i=0}^{1}p_i^T(\tilde{x})\sum_{k=1}^{K} p_i(T_k|\tilde{x}) \log D_\mathrm{LA}(k,i|\tilde{x}) + \lambda \left(\sum_{i=0}^1\sum_{k=1}^K D_\mathrm{LA}(k,i|\tilde{x})-1\right),
\end{align}
with $p_1^T(\tilde{x})=p_d^T(\tilde{x})$, $p_0^T(\tilde{x})=p_g^T(\tilde{x})$, $p_1(T_k|\tilde{x})=p_d(T_k|\tilde{x})$ and $p_0(T_k|\tilde{x})=p_g(T_k|\tilde{x})$, and $\lambda\in\mathbb{R}$ the Lagrange multiplier.

Calculate the derivatives with respect to $D_\mathrm{LA}(k,i|\tilde{x})$ and $\lambda$ and let them equals to 0, then we have:
\begin{align}
    \frac{\partial \mathcal{L}}{\partial D_\mathrm{LA}(k,i|\tilde{x})} = \frac{p_i^T(\tilde{x})p_i(T_k|\tilde{x})}{D_\mathrm{LA}(k,i|\tilde{x})}+\lambda=0, \frac{\partial \mathcal{L}}{\partial \lambda} = \sum_{i=0}^1 \sum_{k=1}^K D_\mathrm{LA}(k,i|\tilde{x})-1=0.
\end{align}

By solving the above equations and according to $\frac{\partial^2\mathcal{L}}{\partial D_\mathrm{LA}(k,i|\tilde{x})^2}=-\frac{p_i^T(\tilde{x})p_i(T_k|\tilde{x})}{(D_\mathrm{LA}(k,i|\tilde{x}))^2}<0$, the optimal label-augmented discriminator for a transformed data $\tilde{x}\in\tilde{\mathcal{X}}$ can be expressed as follows:
\begin{align}
    D_\mathrm{LA}^*(k,i|\tilde{x})
    &=\frac{p_i^T(\tilde{x})p_i(T_k|\tilde{x})}{\sum_{i=0}^1\sum_{k=1}^K p_i^T(\tilde{x})p_i(T_k|\tilde{x})} =\frac{p(T_k)p_i(\tilde{x}|T_k)}{\sum_{i=0}^1\sum_{k=1}^K p(T_k)p_i(\tilde{x}|T_k)} \\
    &=\frac{p_i(\tilde{x}|T_k)}{\sum_{i=0}^1\sum_{k=1}^K p_i(\tilde{x}|T_k)}=\frac{p_i(\tilde{x}|T_k)}{\sum_{k=1}^K p_0(\tilde{x}|T_k)+p_1(\tilde{x}|T_k)}.
\end{align}

The third equation holds because of $p(T_k)=\frac{1}{K}, \forall T_k\in\mathcal{T}$.
This concludes the proof because of the defined notations: $p_1(\tilde{x}|T_k)=p_d(\tilde{x}|T_k)=p_d^{T_k}(\tilde{x})$ and $p_0(\tilde{x}|T_k)=p_g(\tilde{x}|T_k)=p_g^{T_k}(\tilde{x})$.

\end{proof}

\subsection{Proof of Theorem~\ref{thm:ssgan-la}}
\label{sec:proof-ssgan-la-G}

\ssganlaG*

\begin{proof}

\begin{align}
    &\max_G\mathbb{E}_{x\sim\mathcal{P}_g,T_k\sim\mathcal{T}}[\log D_\mathrm{LA}^*(k,1|T_k(x))]-\mathbb{E}_{x\sim\mathcal{P}_g}\mathbb{E}_{T_k\sim\mathcal{T}}[\log D_\mathrm{LA}^*(k,0|T_k(x))] \\
    \Rightarrow&\max_G \mathbb{E}_{\tilde{x}\sim\mathcal{P}_g^T,T_k\sim\mathcal{T}_g^{\tilde{x}}}[\log D_\mathrm{LA}^*(k,1|\tilde{x})]-\mathbb{E}_{\tilde{x}\sim \mathcal{P}_g^T,T_k\sim\mathcal{T}_g^{\tilde{x}}}[\log D_\mathrm{LA}^*(k,0|\tilde{x})] \\
    \Rightarrow&\max_G \mathbb{E}_{\tilde{x}\sim\mathcal{P}_g^T,T_k\sim\mathcal{T}_g^{\tilde{x}}}\left[\log\frac{D_\mathrm{LA}^*(k,1|\tilde{x})}{D_\mathrm{LA}^*(k,0|\tilde{x})}\right] \\
    \Rightarrow & \max_G \int p_g^T(\tilde{x})\sum_{k=1}^K p_g(T_k|\tilde{x}) \log \left(\frac{p_d(\tilde{x}|T_k)}{p_g(\tilde{x}|T_k)}\right) d\tilde{x} \\
    \Rightarrow & \max_G \sum_{k=1}^K p(T_k)\int p_g(\tilde{x}|T_k) \log \left(\frac{p_d(\tilde{x}|T_k)}{p_g(\tilde{x}|T_k)}\right) d\tilde{x} \\
    \Rightarrow & \min_G \frac{1}{K} \sum_{k=1}^K \mathbb{D}_\mathrm{KL}(\mathcal{P}_g^{T_k}\|\mathcal{P}_d^{T_k}).
\end{align}

Notice that $f$-divergence including the KL divergence is invariant to invertible/affine transformation~\cite{liese2006divergences,qiao2010study,NEURIPS2019_d04cb95b,tran2021data}.
In other words, $\mathbb{D}_\mathrm{KL}(\mathcal{P}_g^{T_k}\|\mathcal{P}_d^{T_k})=\mathbb{D}_\mathrm{KL}(\mathcal{P}_g\|\mathcal{P}_d)$ when the transformation $T_k$ is invertible, which induces $\mathcal{P}_{g}=\arg\min_G \mathbb{D}_\mathrm{KL}(\mathcal{P}_g^{T_k}\|\mathcal{P}_d^{T_k})\#\mathcal{P}_z=\arg\min_G \mathbb{D}_\mathrm{KL}(\mathcal{P}_g\|\mathcal{P}_d)\#\mathcal{P}_z=\mathcal{P}_d$.
In addition, $\mathcal{P}_g=\mathcal{P}_d$ is a sufficient condition for $\mathcal{P}_g^{T_{k}}=\mathcal{P}_d^{T_{k}}$ that fully minimizes $\mathbb{D}_\mathrm{KL}(\mathcal{P}_g^{T_{k}}\|\mathcal{P}_d^{T_{k}})$ regardless of the invertibility of $T_{k}$.
Therefore, the global maximum of the objective function for the generator of SSGAN-LA under the optimal label-augmented discriminator is achieved if and only if $\mathcal{P}_g=\mathcal{P}_d$ when existing a transformation $T_k\in\mathcal{T}$ is invertible.

\end{proof}

\subsection{Proof of Theorem~\ref{thm:dagan}}
\label{sec:proof-dagan}

\daganT*

\begin{proof}

We first prove the first sentence in this Theorem.
According to Proposition~\ref{pro:base}, the objective function of DAGAN can be rewritten as follows:
\begin{align}
    \min_G\max_D V(G,D)=&\mathbb{E}_{x\sim \mathcal{P}_d,T_k\in \mathcal{T}}[\log D(T_k(x))] + \mathbb{E}_{x\sim \mathcal{P}_g,T_k\in \mathcal{T}}[\log (1-D(T_k(x)))]\\
    =&\mathbb{E}_{\tilde{x}\sim \mathcal{P}_d^T,T_k\in \mathcal{T}_d^{\tilde{x}}}[\log D(\tilde{x})] + \mathbb{E}_{\tilde{x}\sim \mathcal{P}_g^T,T_k\in \mathcal{T}_g^{\tilde{x}}}[\log (1-D(\tilde{x}))]\\
    =&\mathbb{E}_{\tilde{x}\sim \mathcal{P}_d^T}[\log D(\tilde{x})] + \mathbb{E}_{\tilde{x}\sim \mathcal{P}_g^T}[\log (1-D(\tilde{x}))].
\end{align}

According to the Theorem 1 in~\cite{NIPS2014_5423}, the global minimum of the virtual training criterion $V(G,D^*)$, given the optimal discriminator $D^*$, is achieved if and only if $\mathcal{P}_g^T=\mathcal{P}_d^T$.

We then prove the second sentence in this Theorem.
The main idea of the proof is to construct countless generated distributions who satisfy the equilibrium point of DAGAN.
However, there is only one real data distribution.
Therefore, the probability that the generator of DAGAN learns the real data distribution is $\frac{1}{\infty}=0$ even though at its equilibrium point.

Since the set of transformations $\mathcal{T}$ forms a group with respect to the composition operator $\circ$, and according to the \textbf{closure property} of group, the composition of any two transformations is also in the set (i.e., $T_i\circ  T_j\in \mathcal{T}, \forall T_i, T_j\in\mathcal{T}$).
In addition, according to the converse-negative proposition of the \textbf{cancellation law} of group, the compositions of a transformation with other different transformations are different from each other (i.e., $T_i\circ T_k\neq T_j\circ T_k, \forall T_i\neq T_j, T_k\in\mathcal{T}$).
Based on the above properties and \textbf{inclusion–exclusion principle}, we have $\{T_j\circ T_i|T_i\in\mathcal{T}\}=\mathcal{T},\forall T_j\in\mathcal{T}$.

Let us construct a family of distribution $\mathcal{P}_{\pi}$ with density of $p_{\pi}(\hat{x})=\sum_{j=1}^{K} \pi_{j} p_d(\hat{x}|{T_j})=\sum_{j=1}^{K} \pi_{j} \int p_d(x) p(\hat{x}|x,T_j) dx$ with mixture weights $\Pi=\{\pi_j\}_{j=1}^K$, subject to $\sum_{j=1}^K \pi_j = 1$ and $0\leq\pi_j\leq 1, \forall \pi_j\in \Pi$, then the mixture transformed distribution of data from $\mathcal{P}_{\pi}$ is:
\begin{align}
    p_{\pi}^T(\tilde{x})=&\int p_{\pi}(\hat{x})\sum_{i=1}^{K}p(T_i)p(\tilde{x}|\hat{x},T_i) d\hat{x} \\
    =&\int \sum_{j=1}^K\pi_{j} \int p_d(x) p(\hat{x}|x,T_j) dx \sum_{i=1}^K p(T_i) p(\tilde{x}|\hat{x},T_i)d\hat{x} \\
    =&\sum_{j=1}^K\pi_{j}\int \sum_{i=1}^K p_d(x) p(\hat{x}|x,T_j) p(T_i) p(\tilde{x}|\hat{x},T_i) dxd\hat{x} \\
    =&\sum_{j=1}^K\pi_{j}\int \sum_{i=1}^K p(T_i) p_d(x) p(\tilde{x},\hat{x}|x,T_i,T_j) dxd\hat{x}\label{eq:dagan1} \\
    =&\sum_{j=1}^K\pi_{j}\int \sum_{i=1}^K p(T_i) p_d(x) p(\tilde{x}|x,T_i,T_j) dx \\
    =&\sum_{j=1}^K\pi_{j}\int \sum_{i=1}^K \frac{1}{K} p_d(x) p(\tilde{x}|x,T_i,T_j) dx \\
    =&\sum_{j=1}^K\pi_{j}\int \sum_{k=1}^K \frac{1}{K} p_d(x) p(\tilde{x}|x,T_k) dx\label{eq:dagan2} \\
    =&\sum_{j=1}^K\pi_{j}\int \sum_{k=1}^K p(T_k) p_d(x) p(\tilde{x}|x,T_k) dx \\ =&\sum_{j=1}^K\pi_jp_d^T(\tilde{x}) \\
    =&p_d^T(\tilde{x}).
\end{align}

Equation~\ref{eq:dagan1} holds because of $p(\hat{x}|x,T_j)=p(\hat{x}|x,T_i,T_j)$ ($\hat{x}$ only depends on $x$ and $T_j$) and $p(\tilde{x}|\hat{x},T_i)=p(\tilde{x}|\hat{x},x,T_i,T_j)$ ($\tilde{x}$ is independent of $x$ and $T_j$ given $\hat{x}$ and $T_i$).
Equation~\ref{eq:dagan2} holds because of $p(\tilde{x}|x,T_i,T_j)=\delta(\tilde{x}-T_j\circ T_i(x))$ and $\{T_j\circ T_i|T_i\in\mathcal{T}\}=\mathcal{T},\forall T_j\in\mathcal{T}$.

Therefore, there are infinite generator distributions ($\mathcal{P}_{\pi}$) that satisfy the equilibrium point of DAGAN ($\mathcal{P}_{\pi}^T=\mathcal{P}_d^T$), but only one of them is the target distribution (i.e., the real data distribution $\mathcal{P}_d$).
In particular, we have $\mathcal{P}_{\pi}=\mathcal{P}_d$ if and only if $\pi_1=1, \pi_k=0,\forall k=\{2,3,\cdots,K\}$, and the corresponding probability is $\mathbb{P}(\mathcal{P}_{\pi}=\mathcal{P}_d|\mathcal{P}_{\pi}^T=\mathcal{P}_d^T)=\mathbb{P}(\pi_1=1,\pi_k=0,\forall \pi_k\in\Pi\setminus\pi_1|\sum_{k=1}^K \pi_k = 1,0\leq \pi_k \leq 1,\forall \pi_k\in\Pi)=0$. This concludes our proof.
\end{proof}

\section{Experimental Settings in Section~\ref{sec:sample_quality}}
\label{sec:settings}

We implement all methods based on unconditional BigGAN~\cite{brock2018large} without utilizing the annotated class labels.
To construct the unconditional BigGAN, we replace the conditional batch normalization~\cite{NIPS2017_7237} in the generator with standard batch normalization~\cite{pmlr-v37-ioffe15} and remove the label projection technique~\cite{miyato2018cgans} in the discriminator.
The network architecture of generators of all methods is the same, and the network architecture of discriminators of all methods is different only in the output layer.
We train all methods for $100$ epochs with a batch size of $100$ on all datasets.
The optimizer is Adam with betas $(\beta_1, \beta_2)=(0.0, 0.999)$ for both the generator and discriminator.
The learning rate for the generator is $2\times 10^{-4}$ on CIFAR-10 and STL-10, and $1\times 10^{-4}$ on Tiny-ImageNet, and the learning rate for the discriminator/classifier is $2\times 10^{-4}$ on CIFAR-10 and STL-10, and $4\times 10^{-4}$ on Tiny-ImageNet.
All baselines use the hinge loss~\cite{lim2017geometric,tran2017deep} as the implementation of the original GAN loss.
\begin{align}
    \min_D \mathcal{L}_D^\mathrm{H} &=\mathbb{E}_{x\sim\mathcal{P}_d}[\max(0,1-D(x))] + \mathbb{E}_{x\sim\mathcal{P}_g}[\max(0,1+D(x))], \\
    \min_G \mathcal{L}_G^\mathrm{H} &=\mathbb{E}_{x\sim\mathcal{P}_g}[-D(x)],
\end{align}
where the output range of the discriminator $D:\mathcal{X}\rightarrow\mathbb{R}$ is unconstrained.

Analogously, we use the multi-class hinge loss to implement the objective functions of SSGAN-LA:
\begin{align}
    \min_{D_\mathrm{LA}} \mathcal{L}_{D_\mathrm{LA}}^\mathrm{MH}
    = & \mathbb{E}_{x\sim\mathcal{P}_d,T_k\sim\mathcal{T}}[\mathbb{E}_{(t,l)\neq (k,1)}[\max(0,1-D_\mathrm{LA}(k,1|T_k(x))+D_\mathrm{LA}(t,l|T_k(x))))]] \notag \\ + & \mathbb{E}_{x\sim\mathcal{P}_g,T_k\sim\mathcal{T}}[\mathbb{E}_{(t,l)\neq (k,0)}[\max(0,1-D_\mathrm{LA}(k,0|T_k(x))+D_\mathrm{LA}(t,l|T_k(x))))]], \\
    \min_G \mathcal{L}_{G}^\mathrm{MH}
    = & \mathbb{E}_{x\sim\mathcal{P}_g,T_k\sim\mathcal{T}}[\mathbb{E}_{(t,l)\neq (k,1)}[-D_\mathrm{LA}(k,1|T_k(x))+D_\mathrm{LA}(t,l|T_k(x)))]] \notag \\ 
    - &\mathbb{E}_{x\sim\mathcal{P}_g,T_k\sim\mathcal{T}}[\mathbb{E}_{(t,l)\neq (k,0)}[-D_\mathrm{LA}(k,0|T_k(x))+D_\mathrm{LA}(t,l|T_k(x)))]],
\end{align}
where the label-augmented discriminator $D_\mathrm{LA}:\tilde{\mathcal{X}}\rightarrow\mathbb{R}^{2K}$ is no longer required to output a softmax probability.
The multi-class hinge loss functions are the extended version of the (binary) hinge loss functions.
Other hyper-parameters in baselines are the same as the authors' suggestions in their papers unless otherwise specified.
To obtain the optimal label-augmented discriminator of SSGAN-LA as much as possible and because that the discriminator solves a more challenging classification task, we set the discriminator updating steps per generator step as $n_\mathrm{dis}=4$ for SSGAN-LA.
We follow the practices in~\cite{gidaris2018unsupervised,pmlr-v119-lee20c,tran2021data} to perform all transformations on each sample for DAGAN, DAGAN-MD, and SSGAN-LA.

\section{Performance of SSGAN-LA with the Original Discriminator}
\label{sec:original_discriminator}

We investigate the performance of the proposed label-augmented discriminator $D_\mathrm{LA}$ combined with the original discriminator $D$ under the same data transformation setting as SSGAN~\cite{Chen_2019_CVPR} that rotate a quarter images in a batch in all four considered directions.
Specifically, the objective functions for the original discriminator $D$, the label-augmented discriminator $D_\mathrm{LA}$, and the generator $G$ of the original discriminator retained SSGAN-LA (SSGAN-LA$^+$) are formulated as the following:
\begin{align}
    \min_{D,D_\mathrm{LA}} \mathcal{L}_D^\mathrm{H} & + \lambda_d \cdot \mathcal{L}_{D_\mathrm{LA}}^\mathrm{MH}, \\
    \min_{G} \mathcal{L}_G^\mathrm{H} & + \lambda_g \cdot \mathcal{L}_{G}^\mathrm{MH},
\end{align}
where $\lambda_d$ and $\lambda_g$ are two hyper-parameters that trade-off the GAN task and the self-supervised task.
The discriminator update steps are $2$ per generator update step for all methods following the practice of SSGAN.
The hyper-parameters $\lambda_d$ and $\lambda_g$ of SSGAN, SSGAN-MS, and SSGAN-LA$^+$ are selected from $\{0.2, 1.0\}$ according to the best FID result.
As reported in Table~\ref{tab:same_setting}, SSGAN-LA$^+$ outperforms all competitive baselines in terms of both FID and IS results, verifying the effectiveness of the proposed self-supervised approach for training GANs.

\begin{table}[htbp]
\caption{FID ($\downarrow$) and IS ($\uparrow$) comparison on CIFAR-10, STL-10, and Tiny-ImageNet. SSGAN-LA$^+$ retains the original discriminator. We use the same data transformation setting of SSGAN.}
\centering
\scalebox{0.89}{
\begin{tabular}{cccccccc}
\toprule
Dataset & Metric & GAN & SSGAN & SSGAN-MS & DAGAN$^+$ & DAGAN-MD & SSGAN-LA$^+$ \\
\midrule
\multirow{2}*{CIFAR-10} & FID & $10.83$ & $7.52$ & $7.08$ & $11.07$ & $10.05$ & $\mathbf{6.64}$ \\
& IS & $8.42$ & $8.29$ & $8.45$ & $8.10$ & $8.14$ & $\mathbf{8.51}$ \\
\midrule 
\multirow{2}*{STL-10} & FID & $20.15$ & $16.84$ & $16.46$ & $18.97$ & $21.68$ & $\mathbf{15.91}$ \\
& IS & $10.25$ & $10.36$ & $10.40$ & $10.12$ & $10.29$ & $\mathbf{10.87}$ \\
\midrule
\multirow{2}*{Tiny-ImageNet} & FID & $31.01$ & $30.09$ & $25.76$ & $58.91$ & $50.14$ & $\mathbf{24.23}$ \\
& IS & $9.80$ & $10.21$ & $10.57$ & $7.06$ & $7.80$ & $\mathbf{10.86}$ \\
\bottomrule
\end{tabular}}
\label{tab:same_setting}
\end{table}

\section{Ablation Study on $\lambda_g$ of SSGAN and SSGAN-MS}
\label{sec:ssgan_ablation}

In this experiment, we investigate the effects of the self-supervised task for the generator in SSGAN~\cite{Chen_2019_CVPR} and SSGAN-MS~\cite{NEURIPS2019_d04cb95b}.
We change the value of $\lambda_g$ while keeping $\lambda_d=1.0$ fixed.
In particular, $\lambda_g=0.0$ means that the generator is trained without self-supervised tasks.
As reported in Table~\ref{tab:ssgan_ablation}, SSGAN and SSGAN-MS with the authors' suggested $\lambda_g$ (i.e., $\lambda_g^*$) generally perform better than those with $\lambda_g=0.0$, respectively, showing that the self-supervised task for the generator could benefit the generation performance of the generator.
Arguably, the reason is that the self-supervised task can reduce the difficulty of optimization of the generator by providing additional useful guidance.
However, as $\lambda_g$ increases to $1.0$, the generation performance reflected by the FID scores show degradation.
This verifies that the implied learning objective in the self-supervised tasks for the generator of SSGAN and SSGAN-MS are essentially inconsistent with the task of generative modeling.

\begin{table}[htbp]
\caption{Ablation study on the hyper-parameter $\lambda_g$ of SSGAN and SSGAN-MS.}
\centering
\scalebox{0.96}{
\begin{tabular}{cccccccc}
\toprule
\multirow{2}*{Dataset} & \multirow{2}*{Metric} & \multicolumn{3}{c}{SSGAN} & \multicolumn{3}{c}{SSGAN-MS} \\
& & $\lambda_g=0.0$ & $\lambda_g=\lambda_g^*$ & $\lambda_g=1.0$ & $\lambda_g=0.0$ & $\lambda_g=\lambda_g^*$ & $\lambda_g=1.0$ \\
\midrule
\multirow{2}*{CIFAR-10} & FID & $8.07$ & $7.52$ & $8.54$ & $7.08$ & $7.16$ & $6.81$ \\
& IS & $8.21$ & $8.29$ & $8.41$ & $8.45$ & $8.45$ & $8.26$ \\
\midrule 
\multirow{2}*{STL-10} & FID & $18.04$ & $16.84$ & $18.88$ & $17.5$ & $16.46$ & $19.26$ \\
& IS & $10.20$ & $10.36$ & $10.03$ & $10.60$ & $10.40$ & $10.01$ \\
\midrule
\multirow{2}*{Tiny-ImageNet} & FID & $30.69$ & $30.09$ & $30.27$ & $99.74$ & $25.76$ & $26.44$ \\
& IS & $10.67$ & $10.21$ & $10.23$ & $6.11$ & $10.57$ & $10.61$ \\
\bottomrule
\end{tabular}}
\label{tab:ssgan_ablation}
\end{table}

\end{document}